\documentclass{article}

\usepackage{xspace}
\usepackage{graphicx}
\usepackage{subcaption}

\usepackage[preprint]{neurips_2025}

\usepackage[utf8]{inputenc} %
\usepackage[T1]{fontenc}    %
\usepackage{hyperref}       %
\usepackage{url}            %
\usepackage{booktabs}       %
\usepackage{amsfonts}       %
\usepackage{nicefrac}       %
\usepackage{microtype}      %
\usepackage{xcolor}         %
\usepackage{algorithm}
\usepackage{enumitem}
\usepackage{multirow}
\usepackage{tcolorbox}
\usepackage{amsmath} 
\usepackage{amsthm}
\newtheorem{lemma}{Lemma}
\newtheorem{theorem}{Theorem}
\usepackage{algorithm}
\usepackage{amsmath,amssymb}
\usepackage[noend]{algpseudocode} %
\usepackage{booktabs}
\usepackage{multirow}
\usepackage{siunitx} %
\usepackage{tabularx}

\usepackage{makecell} %

\sisetup{separate-uncertainty=true}

\newcommand{\algo}[0]{\textsc{Lookahead Reasoning}\xspace}
\newcommand{\algostep}[0]{\textsc{draft stage}\xspace}

\newcommand{\defEq}{\stackrel{.}{=}}
\newcommand{\defeq}{\defEq}
\newcommand{\stepseq}[1]{\boldsymbol{#1}} %
\newcommand{\prefix}{\boldsymbol{x}} %

\title{Scaling Speculative Decoding with \\ \algo}

\author{%
Yichao Fu$^{1}$ \quad 
Rui Ge$^{2}$ \quad %
Zelei Shao$^{3}$ \quad 
Zhijie Deng$^{2}$ \quad 
Hao Zhang$^{1}$ \\
$^1$UCSD\quad $^2$Shanghai Jiao Tong University\quad $^3$UIUC\\
}

\begin{document}

\maketitle

\begin{abstract}

Reasoning models excel by generating long chain-of-thoughts, but decoding the resulting thousands of tokens is slow. Token-level speculative decoding (SD) helps, but its benefit is capped, 
because the chance that an entire $\gamma$-token guess is correct falls exponentially as $\gamma$ grows. This means allocating more compute for longer token drafts faces an algorithmic ceiling -- making the speedup modest and hardware-agnostic. We raise this ceiling with \algo, which exploits a second, step-level layer of parallelism. 
Our key insight is that reasoning models generate step-by-step, and each step needs only to be semantically correct, not exact token matching.  In \algo, a lightweight draft model proposes several future steps; the target model expands each proposal in one batched pass, and a verifier keeps semantically correct steps while letting the target regenerate any that fail. Token-level SD still operates within each reasoning step, so the two layers of parallelism multiply. We show \algo lifts the peak speedup of SD both theoretically and empirically. Across GSM8K, AIME, and other benchmarks, \algo improves the speedup of SD from 1.4x to 2.1x while preserving answer quality, and its speedup scales better with additional GPU throughput. Our code is available at \url{https://github.com/hao-ai-lab/LookaheadReasoning}
\end{abstract}

\section{Introduction}
Large reasoning models (LRMs) have recently pushed the state of the art in math problem solving and program synthesis by generating explicit, long Chains of Thoughts (CoT)~\cite{wei2022chain}.  In these models, an answer unfolds as a sequence of intermediate reasoning ``steps'', and each step arrives token by token via autoregressive decoding. If a solution needs $N$ steps and each step needs $T$ tokens, the model must generate $O(NT)$ tokens, often running into tens of thousands of tokens and minutes of wall-clock time. For instance, OpenAI's o1 model~\cite{openai2024openaio1card} may take more than 2 minutes to solve a single problem from the International Mathematical Olympiad (IMO) challenges.

Speculative decoding (SD) mitigates this token-level dependency by spending additional FLOPs to shorten the critical path of generation: a cheap draft model proposes $\gamma$ future tokens and the expensive target model then verifies them in parallel; if every guess matches, the decoding can fast forward $\gamma+1$ positions at once. However, in the face of LRMs with long decode, two facts limit how far this idea can scale. First, the probability of an entire $\gamma$-token sequence is correct drops almost exponentially with $\gamma$ (\S\ref{background:sd}), so the expected number of accepted tokens quickly saturates as $\gamma$ grows. Second, the verifier must still verify the target logits for all $\gamma$ positions, and that cost grows linearly. This results in a speedup curve that climbs with small $\gamma$, plateaus after a few dozen tokens, and can even decline once the verification cost dominates. For example, in a real profiling, we observe SD's speedup caps at 1.4x (Figure~\ref{fig:speedup_comparison}). As this ceiling is algorithmic rather than hardware-bound, this means that allocating more FLOPs in SD yields only diminishing returns, making SD's acceleration not scale with future accelerators. As LRMs produce ever‑longer generations, the number of tokens SD can safely skip does not grow proportionally, so the end‑to‑end latency remains substantial.

This paper makes a key observation that reasoning is naturally hierarchical: a full chain-of-thought breaks into discrete steps, and each step unrolls tokens by token. To reach the correct answer, a reasoning step requires only semantic correctness but not exact token matches. To illustrate, we replaced over 50\% of DeepSeek-R1 32B's reasoning steps with semantically equivalent ones from another smaller model. The impact on overall task accuracy was minimal, with deviations typically not exceeding 2\% (\S\ref{sec:main-results}).
This looser requirement exposes a coarser unit for speculation: in addition to guessing the next few tokens, a model can guess and verify the next few reasoning steps. These step proposals and verification are independent, so they can be batched and executed in parallel, making full use of GPU's batching capacity. At the same time, token-level speculation can still operate within each step, achieving two complementary layers of parallelism rather than one.

\begin{figure}[t]
    \centering
    \includegraphics[width=0.8\textwidth]{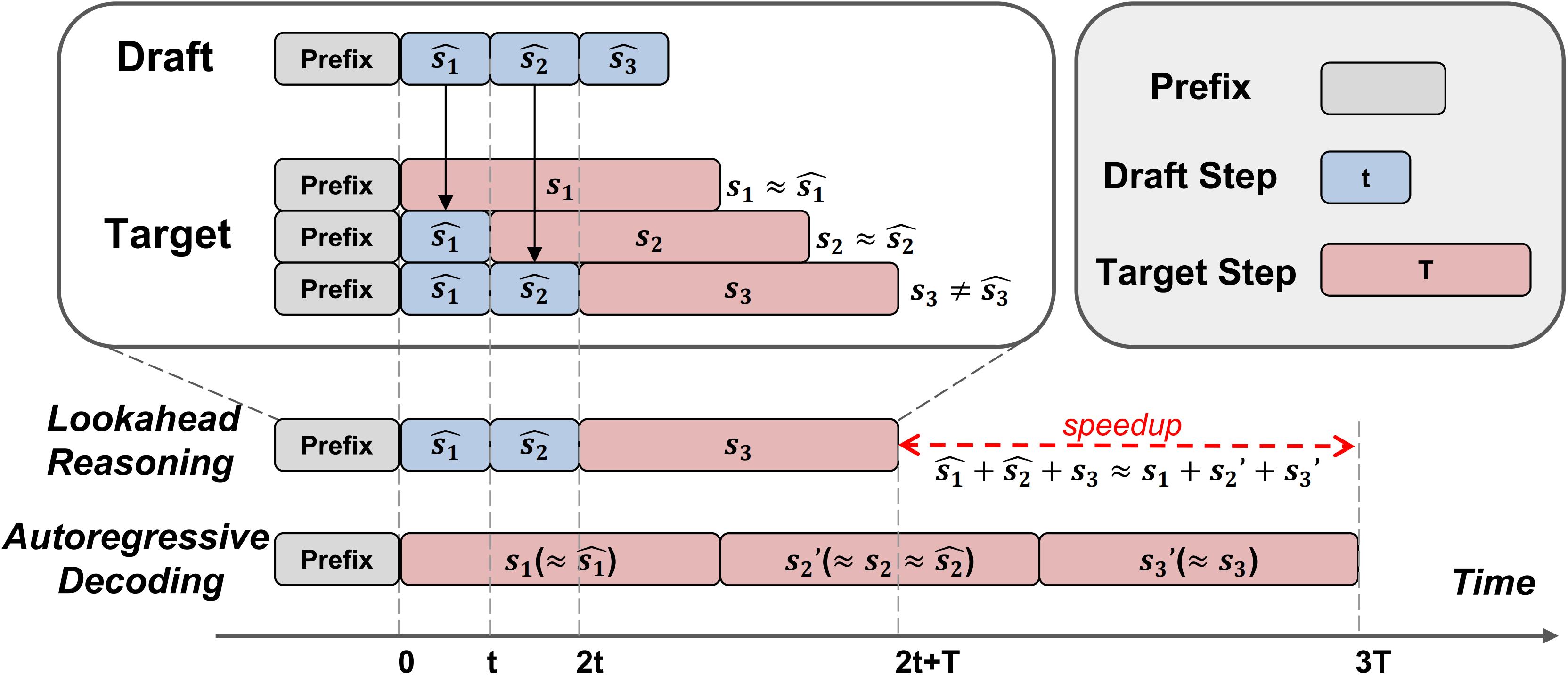}
    \caption{\textbf{One cycle of \algo.} The draft model proposes $\gamma=3$ steps $\{\hat{\stepseq{s}_1}$, $\hat{\stepseq{s}_2}$, $\hat{\stepseq{s}_3}\}$. The target model then generate $\{\stepseq{s}_1$, $\stepseq{s}_2$, $\stepseq{s}_3\}$ based on prefixes and $\{\hat{\stepseq{s}_1}$, $\hat{\stepseq{s}_2}$, $\hat{\stepseq{s}_3}\}$, respectively. Verifier checks if draft and target steps are semantically equivalent (e.g., $\stepseq{s}_1 \approx  \hat{\stepseq{s}_1}$). If the first two steps are equivalent but the third is not, \algo outputs the verified draft steps ($\hat{\stepseq{s}_1}$, $\hat{\stepseq{s}_2}$) followed by the target's correction ($\stepseq{s}_3$). This allows accepting multiple steps with only a lowered latency (e.g., $2t + T$) compared to the sequential target calls in autoregressive decoding (e.g., $3T$), where $t$ is draft step time and $T$ is target step time.}
    \label{fig:algo_cycle}
\end{figure}

\begin{figure}[t]
    \centering
    \includegraphics[width=0.5\textwidth]{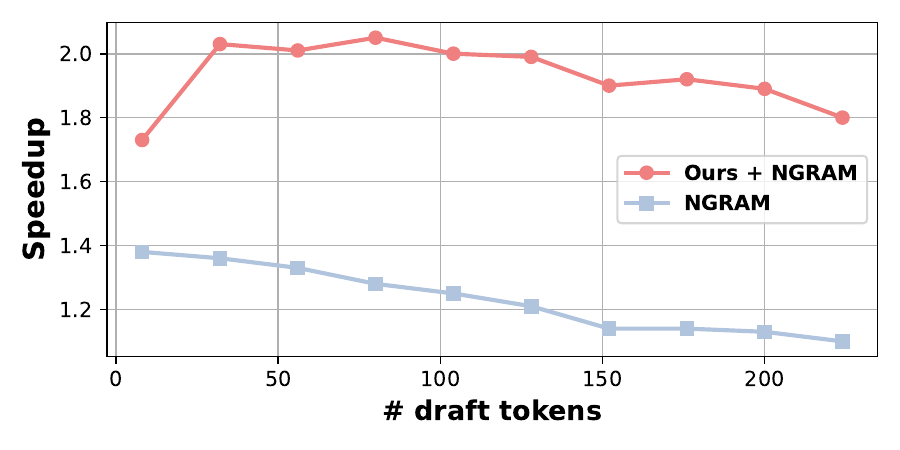}
    \caption{\textbf{Speedup vs Draft Tokens.} Speedup over autoregressive decoding, comparing \algo combined with token-level SD (NGram-based) (red line) to SD alone (blue line). Our method is orthogonal to token-level SD and improves the maximum speedup from 1.4× to 2.1×.}

    \label{fig:speedup_comparison}
\end{figure}

This paper develops \emph{\algo} based on this insight, with one operational cycle shown in Figure~\ref{fig:algo_cycle}. First, a lightweight draft model autoregressively generates several sequential, future \emph{reasoning steps} $\{\hat{\stepseq{s}}_1, \hat{\stepseq{s}}_2, \hat{\stepseq{s}}_3\}$. Concurrently, the target LRM generates corresponding follow-up steps $\{{\stepseq{s}}_1, {\stepseq{s}}_2, {\stepseq{s}}_3\}$, where each $\stepseq{s}_i$ is generated based on a prefix formed by the initial context concatenated with the sequence of preceding draft steps $\hat{\stepseq{s}}_1, \dots, \hat{\stepseq{s}}_{i-1}$. Notably, the generations of $\{{\stepseq{s}}_1, {\stepseq{s}}_2, {\stepseq{s}}_3\}$ are issued as a batch running in parallel to exploit additional cross-request parallelism on GPUs.
A lightweight verifier, implemented as a small LLM-as-a-Judge or an embedding model, then begins with the first speculative step to determine if the draft’s original speculative step $\hat{\stepseq{s}}_1$ semantically aligns with this orcale step ${\stepseq{s}}_1$. If the step passes verification, we keep it and proceed to the verification of $\hat{\stepseq{s}}_2$ and $\stepseq{s}_2$, which are already available due to batched execution. If it fails, we drop it and revert to the target model's standard generation. Concurrently, token-level speculation can operate independently when the target/draft model generates the content of each step.

\algo operates at step level, an axis orthogonal to token-level speculation. Because step-level speculation absorbs compute that would otherwise hit the speedup ceiling by token-level speculation, the method scales better with hardware. Additional FLOPs can draft more (or deeper) steps instead of lengthening token guesses, sidestepping diminishing returns faced by token-level-only speculative decoding. For example, on GSM8K, \algo lifts token-level SD's peak speedup from 1.4x to 2.1x (combined), as depicted in Figure~\ref{fig:speedup_comparison}.
Even when available compute is limited, we prove that the peak speedup given limited compute shall be achieved only by combining both levels of speculation (\S\ref{sec:optimality}). %

A key design consideration is the choice of verifier. While an ideal semantic verifier ensures no accuracy loss, practical ones balance compute cost against judgment accuracy. For instance, a looser verification may boost draft acceptance (and speedup) but risks accuracy drop from erroneous steps. We finally opted for a 7B LLM-As-a-Judge, striking a balance between these competing factors (\S\ref{ablation}).

To sum up, our contributions can be listed as follows:
(1) We develop \algo, a novel \textbf{\textit{step-level}} speculation dimension to scale speculative decoding, orthogonal to existing \textbf{\textit{token-level}} approaches.
(2) We present theoretical analysis demonstrating significant speedups from our method, both as a standalone technique and when combined with token-level SD.
(3) We conduct extensive experiments showing consistent performance improvements across diverse datasets.

\section{Background}

\noindent \textbf{Speculative Decoding.}
\label{background:sd}
LLMs autoregressively generate one token at a time, with each next token \(\prefix_{t+1}\) sampled from the distribution \(P(\prefix_{t+1}\mid \prefix_{1:t})\). This sequential dependency poses a fundamental bottleneck to inference speed. Speculative decoding~\cite{leviathan2023fast} mitigates this challenge using a ``guess‑and‑verify'' strategy with two models: a lightweight draft model \(q\) and a strong target model \(p\). Given a context \(\prefix_{1:t}\), \(q\) autoregressively proposes a sequence of \(\gamma\) candidates tokens, \(\hat \prefix_{t+1:t+\gamma}\), along with their draft probabilities \(Q\). Subsequently, \(p\) verifies these \(\gamma\) tokens in a single parallel forward pass, yielding the target probabilities \(P\). A rejection‑sampling procedure then sequentially processes each proposed token \(\hat \prefix_{t+i}\). If a token is accepted, it is appended to the output; if rejected, the process halts, and a final token is sampled from \(p\)'s distribution based on the last accepted token. This allows for the acceptance of \(n \le \gamma\) tokens in fewer steps than standard autoregression.

The theoretical speedup achieved by this speculative decoding approach, assuming negligible verification overhead beyond the target model's single pass, can be characterized by:\\
\[
g(\gamma)
=\frac{1 - \alpha^{\gamma+1}}{(1 - \alpha)\,(1 + c\,\gamma)},
\]  
 where \(\alpha\) represents the average acceptance rate of each token drafted by \(q\) (assuming it's independent for each token in the sequence). Note that the probability of an entire sequence of $\gamma$ tokens being accepted typically decreases exponentially with $\gamma$, as errors accumulate. \(c = T_q / T_p\) is the latency ratio of generating a single token by the draft model relative to the target model, where $T_q$ and $T_p$ are step time of $q$ and $p$, respectively. Furthermore, for all \(c \ge 0\) and \(\gamma \ge 0\), the speedup \(g(\gamma)\) is upper-bounded by \(1 / (1 - \alpha)\), a limit approached only in the idealized case where \(c=0\). This inherent upper bound on $g(\gamma)$ signifies a critical limitation: beyond an optimal point, investing more computational resources by increasing the speculative length ($\gamma$) yields diminishing or even negative returns on speedup. Therefore, this algorithmic ceiling implies that the acceleration gains from token-level speculative decoding do not scale with improvements in hardware, such as more powerful GPUs. Consequently, for reasoning models with longer CoTs, the bounded acceleration offered by token-level speculative decoding alone highlights an urgent need for more potent acceleration strategies.

\noindent \textbf{LLM Reasoning.}
Large reasoning models (LRMs)~\cite{openai2024openaio1card,guo2025deepseek} are increasingly pivotal for complex tasks such as math problem-solving and coding. These models often generate solutions by giving a "chain-of-thought" (CoT)—a sequence of intermediate reasoning steps, denoted as \(\stepseq{s}\), produced \textit{step-by-step} to derive a final answer~\cite{wei2022chain}. Each reasoning step ($\stepseq{s}_i$) typically conditions on the previous one ($\stepseq{s}_{i-1}$), creating a sequential dependency analogous to token-level autoregression but at a higher conceptual granularity. We observe that this step-wise structure itself presents a significant opportunity for acceleration. Specifically, entire reasoning steps can be speculatively proposed by a draft model, denoted as $\hat{\stepseq{s}}$. Our preliminary experiments (\S\ref{sec:main-results}) show this potential. A small 1.5B draft model can generate speculative steps $\hat{\stepseq{s}}$ that semantically align with over 50\% of ground-truth steps $\stepseq{s}$ from a much larger 32B target model. Besides, this is achieved while maintaining comparable task accuracy.

\vspace{-5pt}

\section{Method}

\vspace{-5pt}

In this section, we explain the details of \algo, then provide theoretical analysis that shows its performance benefits. Furthermore, since both step-level and token-level speculative generation rely on increasing concurrency, we show that in real-world settings—where the two methods compete for limited concurrency resources—peak performance gains can only be achieved when combining both speculative strategies together.

\subsection{\algo: Step-Level Speculative Generation Solution}

The core idea of \algo is to perform speculation and verification on entire steps rather than individual tokens. To put it clear, we first presented a synchronous version of this approach in Algorithm \ref{alg:step-spec-decode-compressed}, and then conceptually illustrate an optimized asynchronous variant in Figure \ref{fig:algo_cycle}.

 As detailed in Algorithm \ref{alg:step-spec-decode-compressed}  (sync version), one cycle of \algo proceeds as follows:

1. \textbf{Draft Step Generation:} Given  token prefix $\prefix_{1:t}$, the draft model $q$ first autoregressive generates a sequence of $\gamma$ candidate steps, denoted as $\hat{\stepseq{s}_0}, \dots, \hat{\stepseq{s}_{\gamma-1}}$. Each step $\hat{\stepseq{s}_j}$ is generated conditioned on the prefix extended by all preceding draft steps: $\prefix_{1:t} \oplus \bigoplus_{k=0}^{j-1} \hat{\stepseq{s}_k}$. In practice, we simply use `\textbackslash n\textbackslash n' as the step break as we found it's a common flag for step split in various reasoning models~\cite{openaio3,guo2025deepseek,qwen3}.\\
2. \textbf{Parallel Target Step Generation:} Same as in the standard speculative decoding~\cite{leviathan2023fast}, once all $\gamma$ draft steps are available, target model $p$ generates following steps  $\stepseq{s}_0, \dots, \stepseq{s}_{\gamma}$ accordingly in parallel. \\
3. \textbf{Verification and Output Construction:} The algorithm then determines the longest prefix of accepted draft steps. Verification between each draft step $\hat{\stepseq{s}_j}$ and its corresponding target step $\stepseq{s}_j$ is performed by a verifier $V(\stepseq{s}_j, \hat{\stepseq{s}_j})$, which assesses whether if $\hat{\stepseq{s}_j}$ is an acceptable substitute for $\stepseq{s}_j$, i.e. whether they are semantic similar.

\vspace{-5pt}

\begin{algorithm}[h] %
\caption{\tt{\algo}(Sync Version)} %
\label{alg:step-spec-decode-compressed}
\begin{algorithmic}[1]
\small
\Require Draft model $q$, Target model $p$, Prefix $\prefix_{1:t}$, Max lookahead steps $\gamma$, Verifier $V(\cdot, \cdot) \to \{\text{True}, \text{False}\}$
\State Initialize empty step sequences $\hat{\stepseq{s}_0}, \dots, \hat{\stepseq{s}_{\gamma-1}}$, ${\stepseq{s}_0}, \dots, {\stepseq{s}_\gamma}$ %
\State $\prefix_{\text{current}} \defeq \prefix_{1:t}$
\For{$j = 0$ to $\gamma-1$} \Comment{Generate $\gamma$ draft steps sequentially}
    \State $\hat{\stepseq{s}_j} \defeq q.\text{GenerateStep}(\prefix_{\text{current}})$;~~ $\prefix_{\text{current}} \defeq \prefix_{\text{current}} \oplus \hat{\stepseq{s}_j}$ %
\EndFor

\State \textbf{in parallel do} for $j = 0$ to $\gamma$:\Comment{Compute target steps $\stepseq{s}_j$ in parallel based on draft prefixes}
    \State \hspace{\algorithmicindent} Let $\prefix'_{j} \defeq \prefix_{1:t} \oplus \bigoplus_{k=0}^{j-1} 
  \hat{\stepseq{s}_k}$ if \(j \ge 1 \) else $\prefix_{1:t}$ \Comment{Prefix before draft step $j$} %
    \State \hspace{\algorithmicindent} ${\stepseq{s}_j} \defeq p.\text{GenerateStep}(\prefix'_{j})$ %
\State \textbf{end parallel}

\State $j^{*} \defeq \min( \{j \in \{0..\gamma-1\} \mid V(\stepseq{s}_{j}, \hat{\stepseq{s}_{j}}) == \text{False}\} \cup \{\gamma\} )$ \Comment{Find first unaccepted step using $V$} %
\State $\text{OutputSequence} \defeq (\bigoplus_{k=0}^{j^*-1}\hat{\stepseq{s}_k}) \oplus {\stepseq{s}_{j^{*}}}$ \Comment{Append verified drafts + decisive target} %
\Ensure $\text{OutputSequence}$
\end{algorithmic}
\end{algorithm}

\vspace{-5pt}

\textbf{Verifier Selection.} The choice of verifier ($V$) is a pivotal design consideration in \algo. While an ideal semantic verifier ensures no accuracy loss, practical implementations face a primary trade-off between judgment precision and computational overhead;
Furthermore, the strictness of verification (e.g., a threshold) presents a secondary trade-off, potentially boosting draft acceptance and speedup at the risk of degrading task accuracy from erroneously accepted steps. We explore three common paradigms for semantic assessment—LLM-as-a-Judge~\cite{zheng2023judging} for nuanced evaluation, embedding-based verifier~\cite{reimers-2019-sentence-bert} for efficient similarity, and target model scoring~\cite{pan2025specreasonfastaccurateinferencetime}—each with distinct cost-precision profiles, empirically evaluating their impact in \S\ref{ablation}.

\textbf{Asynchronous Generation (Illustrated in Figure \ref{fig:algo_cycle}.} In Algorithm \ref{alg:step-spec-decode-compressed}, the parallel verification steps launch only after all $\gamma$ draft steps $\hat{\stepseq{s}_j}$ with $j\in\{0...\gamma-1\}$ are produced. An optimized asynchronous implementation, conceptually depicted in Figure \ref{fig:algo_cycle}, can begin generating a target step $\stepseq{s}_j$ as soon as its required prefix (containing $\prefix_{1:t}, \hat{\stepseq{s}_0}, \dots, \hat{\stepseq{s}_{j-1}}$) becomes available from the draft model. This async execution brings overlap for draft/target generation and verification phases, which can significantly reduce end-to-end latency compared with the synchronous version. Note that in this asynchronous mode implementation, both the draft and target models will ``lookahead'' $\gamma$ steps, ensuring maximal utilization of parallel processing. This is different from the sync version that draft model generate $\gamma$ drafts while target model generate $\gamma+1$ steps in each cycle.

\textbf{Multi-Branch Drafting.} To further increase the number of the accepted reasoning steps, we explore tree-structure generation where the draft model proposes multiple candidate steps at each speculative position. Specifically, instead of generating a single candidate chain, the draft $q$ can propose a set of $W$ alternative steps for each position $j$ in the draft sequence. Once a step is generated, the draft then proposes $W$ child  candidates in parallel for the subsequent position $j+1$. This branching process continues up to a maximum $\gamma$ steps, leading to an exponential growth in the total number of candidate sequences explores, i.e., $W^\gamma$. The target model $p$, however, still generate one single candidate continuation step for each position $j$ (based on the draft prefix). The verifier $V$ would then check if \textit{any} of the $W$ proposed draft branches for that position $j$ semantically aligns with the target model's step. If such a match is found, that branch is accepted and other branches are discarded. This multi-branch strategy aims to boost the likelihood of speculative success, albeit at the cost of increased computational effort in the drafting phase. We discussed its trade-off in \S\ref{ablation}.

\subsection{Theoretical Speedup Analysis for \algo}\label{sec:theory}

We now analyze the potential performance gains of \algo. We make simplifying assumptions for clarity: negligible verification overhead, constant cost for generating steps, and a single draft branch at each stage.

\textbf{Notation:} Let $\gamma_1$ be the maximum number of draft steps generated sequentially, and \(k_1=\gamma_1 \) be the number of target steps that generate in parallel. Let $T$ be the wall-time cost for the target model $p$ to generate one step, and $c_1T$ be the cost for the draft model $q$ ($0 < c_1 < 1$). Let $\alpha_1 \in (0,1)$ be the probability that a draft step is accepted. \\

\textbf{Step-Level Speedup for Sync Version of \algo:}
The latency speedup for sync Lookahead Reasoning is 
\[
f_{sync}(k_1)
= \frac{1 - \alpha_1^{k_1}}
       {(1 - \alpha_1)\,(1 - c_1 + c_1\,k_1)}.
\]

\textbf{Step-Level Speedup for Async Version of \algo:}
The speedup depends on whether the draft generation is limited by the maximum depth $k_1$ or by the relative cost $c_1$. Let $X_i$ be the number of consecutively accepted draft steps in stage $i$. The expected number of accepted steps before a rejection is $E[X_i] = \alpha_1 / (1-\alpha_1)$.

We define the asymptotic speedup $S$ as the ratio of steps generated by \algo compared to a target-only baseline over the same wall-time. Two cases arise:

1.  \(k_1 \ge \lceil 1/c_1 \rceil \): The draft model is relatively slow, and generating $k_1$ drafts takes longer than one target step. The depth limit $k_1$ is effectively inactive. The speedup converges to:\\
    \[ S_1 = \frac{1+E[X_i]}{1+c_1 E[X_i]} = \frac{1}{c_1 + (1-c_1)(1-\alpha_1)} \]
    
2.  \(k_1 < \lceil 1/c_1 \rceil \): The draft model is fast enough to generate $k_1$ steps within time $T$. The maximum depth $k_1$ limits the number of speculative steps per cycle. The speedup converges to: \\
    \[ S_2 = \frac{E[1+X_i]}{E[\lceil(X_i+1)/k_1\rceil + c_1(X_i \bmod k_1)]} = \frac{1-\alpha_1^{k_1}}{(1-\alpha_1) + c_1\bigl[\alpha_1 - \alpha_1^{k+1}-k_1(1-\alpha_1)\alpha_1^{k_1}\bigr]} \]
(Detailed derivations are provided in Appendix \ref{sec:speedup proof}). Let $f_{async}(k_1)$ denote the step-level speedup, where $f_{async}(k_1) = S_1$ or $S_2$ depending on the case.

\subsection{Optimal Speculation Strategies under Concurrency Constraints}\label{sec:optimality}
Token-level speculative decoding \cite{leviathan2023fast} and step-level speculative decoding are orthogonal to each other. If \(k_2\) is the number of \textit{additional} tokens speculated by the draft model within each step generation (for both draft and target models, assuming they use internal speculation) and \(c_2\) is the ratio of execution time of the draft model and target model in speculative decoding, its speedup is given by \(g(k_2)\), based on the token acceptance rate \(\alpha_2\):\\
\[ 
g(k_2) = \frac{1 - \alpha_2^{k_2}}{(1 - \alpha_2)\,(1 - c + c\,k_2)}
\]

Since these mechanisms operate at different granularities (inter-step vs. intra-step), their speedups multiply, yielding a combined speedup \(h(k_1, k_2) = f(k_1) \times g(k_2)\). However, these two orthogonal parallelism dimensions compete for computational resources, making it crucial to determine the optimal resource allocation strategy to achieve maximum speedup. In this work, we focus on a fundamental question: is using both speculation methods superior to applying only one method?

\textbf{Optimality of Hybrid Approach under Budget Constraint:}

In real-world systems, memory and computational constraints necessitate capping the total degree of parallelism (\(M\)) available to the target model, i.e., \( ParallelDim_g \times ParallelDim_f \) for step-level and token-level speculative decoding methods, respectively. This constraint transforms our earlier question into a resource allocation optimization problem: given a finite parallel budget (\(M\)), should we distribute resources across both parallelism dimensions or concentrate them on a single method? Consequently, our design goal becomes:

\begin{equation}\label{eq:opt}
\max_{ParallelDim_g \times ParallelDim_f \le M}
h(k_1, k_2) = f(k_1) \times g(k_2).
\end{equation}
Where \(ParallelDim_g = k_2\), and

\[ParallelDim_f = 
\begin{cases}
k_1, & sync \\
\min\{\lceil \frac{1}{c} \rceil ,k_1\}, & async 
\end{cases}\]
It's easy to see that if we set \( k_1 = 1\), then we are using purely token-level speculative decoding, whereas if we set \( k_2 = 1\), then we are using purely lookahead reasoning.

\textbf{Theorem (Hybrid Method Optimality for Async Algorithm \algo):} Under the conditions of acceptance rates ($0.52 < \alpha_1, \alpha_2 < 0.8$), reasonably efficient draft models ($c_1 < \frac{1}{3}, c_2 < \frac{1}{5}$), and sufficient parallelism budget $M \ge 16$, the maximum speedup $h(k_1, k_2)$ is achieved if and only if a hybrid strategy is employed, meaning both $k_1 \ge 2$ and $k_2 \ge 2$.

These conditions are broadly representative of real-world scenarios: acceptance rates ($\alpha_1, \alpha_2$) in the 0.52-0.8 range are common in speculative decoding~\cite{li2024eagle} and our experiments (§\ref{sec:main-results}); draft model efficiency ratios ($c_1$) below $\frac{1}{3}$ and ($c_2$) below $\frac{1}{5}$ are also common; and the parallelism budget ($M\ge16$) reflects typical GPU capabilities. This analysis demonstrates that neither pure step-level nor pure token-level speculation is optimal under a fixed parallelism budget. The highest theoretical speedup is obtained by judiciously combining both strategies, leveraging parallelism across steps ($k_1$) and within steps ($k_2$). This motivates architectures and systems that can effectively manage and exploit both levels of speculative execution. It is empirically validated in \S\ref{sec:orthgonal}. We provide a complete proof in Appendix~\ref{app:optimal}. Additionally, we analyze in detail the conditions under which single methods (either token-level or step-level) outperform hybrid approaches, and conversely, when combining both methods yields superior performance (Appendix~\ref{app:optimal}).

\section{Experiment}\label{sec:experiment-setting}

\textit{\textbf{Models.}}
We evaluate two popular open-source reasoning model series: DeepSeek-R1-Distill~\cite{guo2025deepseek} and Qwen3~\cite{qwen3}.
For the DeepSeek-R1-Distill series, the 1.5B version serves as the draft model and the 32B version as the target model.
Similarly, for the Qwen3 series, the 1.7B model is used as the draft model and the 32B model as the target.
Unless otherwise specified, Qwen2.5-7B-Instruct~\cite{yang2024qwen2} is employed as the judgement model. A deliberately designed judge prompt template allows our model to assess the semantic alignment between two sentences in just one prefill pass (Appendix~\ref{app:prompt_template}).

\textit{\textbf{Datasets.}}
Our evaluation spans a suite of benchmarks, aligning with previous speculative decoding research
~\cite{fu2024break,li2025eagle}
 and reasoning model evaluations~\cite{guo2025deepseek,qwen3}. For code generation, we use HumanEval~\cite{chen2021codex} and LiveCodeBench~\cite{jain2024livecodebench}. Math reasoning tasks are assessed using GSM8K~\cite{cobbe2021training}, AIME'24~\cite{aime}, and AMC12'23~\cite{amc}. For question answering, we include GPQA~\cite{rein2024gpqa} and MT-Bench~\cite{zheng2023judging}. Specific to dataset sampling, we utilize 40 out of 50 problems from AMC12'23, selected by Qwen2.5 Math~\cite{qwen2math}, and randomly sample 100 queries from the 1.3K GSM8K test set. For LiveCodeBench, We select 268 problems collected between August 2024 and Janaury 2025, following previous research~\cite{guo2025deepseek}.

\textit{\textbf{General Parameters.}}
LLM generation settings are configured specifically for each model series. For the DeepSeek-R1-Distill series, we adhere to the official settings with a temperature of 0.6, top\_p of 0.95, and a maximum generation length of 32K. For the Qwen3 series, the temperature is set to 0.6, top\_p to 0.95, min\_p to 0, top\_k to 20, and the maximum generation length is 37K. These maximum generation lengths are chosen to ensure complete outputs. We use prompt-lookup decoding (n-gram)~\cite{saxena2023prompt} as a representative speculative decoding (SD) method: the max lookup tokens is set to 1 for GSM8K and 2 for other datasets. The number of speculative tokens is set to 8 for SD and  the number of speculative steps is set to 6 for \algo by default.

\textit{\textbf{Testbed.}}
Experiments are conducted on a server equipped with eight NVIDIA H100 GPUs. Target models (32B) are deployed across two H100 GPUs using tensor parallelism. Draft models (1.5B/1.7B) and the default judge model (Qwen2.5-7B-Instruct) are each deployed on a single H100 GPU. Our algorithm is built upon the vLLM v0.8.3. Both the baseline and our proposed method are evaluated using vLLM~\cite{kwon2023efficient} to simulate real-world LLM serving conditions.

\textit{\textbf{Evaluation Metrics.}}
For code generation tasks (HumanEval, LiveCodeBench) and mathematical reasoning benchmarks (GSM8K, AIME'24, AMC12'23), we use accuracy (or pass@1). In question answering, accuracy is used for GPQA, while MT-Bench scores are obtained using Qwen2.5-72B-Instruct~\cite{yang2024qwen2} as the judge. The accuracy on livecodebench is averaged over 8 samples while the accuracy on other datasets are averaged over 16 samples. We calculate the acceptance rate over the entire generation process and the accuracy of the final generated text. The evaluation procedure works as follows: at each generation step, we obtain outputs from both the draft and target models, then use a verifier to determine whether to accept or reject the draft output. The accepted result is added to the history trajectory, and this iterative process repeats until the end of generation is reached.

\subsection{End-to-End Performance of \algo}\label{sec:main-results}

We evaluated the end-to-end performance of \algo (LR) across diverse benchmarks using DeepSeek-R1-Distill and Qwen3 pairs. The detailed results are presented in Table~\ref{tab:performance}. A key finding is LR's consistent ability to preseve task accuracy. Across a variety of benchmarks, LR's accuracy varies within a narrow range relative to the target model's autoregressive baseline, from approximately 1.0\% above to 2.1\% below baseline performance. This accuracy preservation contrasts with  SpecReason, which exhibited more noticeable accuracy reductions on several tasks (e.g., dropping from $91.8\%$ to $85.9\%$ on GSM8K with Deepseek-R1, a $\sim6\%$ decrease). This underscores LR's design principle of preserving output via robust semantic verification.

Furthermore, LR achieves strong accuracy while maintaining high step acceptance rates, often above 50\% and reaching up to 63\%.
These substantial acceptance rates empirically support our initial insight that a smaller draft model can effectively predict semantically correct reasoning steps for a larger target model. LR also delivers significant efficiency gains. Its step-level parallelism is orthogonal to token-level speculative decoding, and their synergy produces substantial speedups. LR alone achieves speedups ranging from 1.04x to 1.71x across various benchmarks and model pairs. When combined with n-gram SD, the total speedup is further amplified, reaching up to 2.11x. This combined approach consistently outperforms n-gram SD alone, demonstrating the added value of step-level speculation. These results, consistent across both Deepseek-R1 and Qwen3 families, underscore the generalizable acceleration benefits of LR.

\begin{table}[ht]
\centering
\caption{\algo's Performance Across Datasets. Speedup is relative to the Autoregressive Decoding of the respective Target Model.}
\label{tab:performance}
\resizebox{\textwidth}{!}{%
\begin{tabular}{llccccccc}
\toprule
\multirow{2}{*}{\textbf{Method}} & \multirow{2}{*}{\textbf{Metric}} & \multicolumn{7}{c}{\textbf{Dataset}} \\
\cmidrule(lr){3-9}
& & \textbf{AIME24} & \textbf{AMC23} & \textbf{GSM8K} & \textbf{HumanEval} & \textbf{GPQA} & \textbf{MT-Bench} & \textbf{LiveCodeBench} \\
\midrule
\multicolumn{9}{c}{\textbf{Draft: Deepseek-R1-Distill 1.5B / Target: Deepseek-R1-Distill 32B}} \\ %
\midrule
Draft Model
& Acc. (\%) & $ {28.5\pm3.9}$ & $ {71.6\pm 4.1}$ & $ {77.6\pm 3.3}$ & $ {67.2\pm 2.4}$ & $ {9.6\pm1.2}$ & $ {6.23\pm1.9^*}$ & $ {14.5\pm1.3}$ \\ %

\midrule
Target Mode
& Acc. (\%) & $ {70.8 \pm 5.2} $ & $ {95.6 \pm 2.3}$ & $ {91.8 \pm 1.9}$ & $ {96.9 \pm 0.8}$ & $ {63.3 \pm 2.2}$ & $ {8.17 \pm 1.2^*}$ & $ {48.9\pm1.3}$ \\
\midrule
SpecReason
& Acc. (\%) & $ {58.3 \pm 5.7}$ & $ {90.6\pm2.6}$ & $ {85.9 \pm 2.2}$ & $ {94.5 \pm 1.5}$ & $ {57.0 \pm 2.8}$ & -- & $ {40.6 \pm 1.5}$ \\
& Apt. & $0.39$    & $0.69$    & $0.93$    & $0.43$   & $0.08$   & --  & $0.25$ \\ %

\midrule
\multirow{3}{*}{LR(ours) } %

& Acc. (\%) & $ {69.2 \pm 8.1}$ & $ {94.1\pm2.1}$ & $ {92.8 \pm 1.8}$ & $ {95.5 \pm 1.8}$ & $ {61.2 \pm 2.8}$ & $ {8.13\pm1.2^*}$ & $ {49.5 \pm 2.3}$ \\
& Apt.  & $0.47$    & $0.58$    & $0.63$    & $0.44$   & $0.35$   & $0.48$   & $0.47$ \\
& Speedup  & $1.36\times$    & $1.48\times$    & $1.71\times$    & $1.27\times$   & $1.14\times$   & $1.27\times$   & $1.21\times$ \\
\midrule
SD & Speedup & $1.53\times$    & $1.50\times$    & $1.39\times$    & $1.32\times$   & $1.48\times$   & $1.25\times$   & $1.45\times$ \\
\midrule
SD+LR(ours) & Speedup & $1.82\times$    & $2.00\times$    & $2.11\times$    & $1.54\times$   & $1.63\times$   & $1.51\times$   & $1.58\times$ \\

\midrule %
\multicolumn{9}{c}{\textbf{Draft: Qwen3 1.5B / Target: Qwen3 32B}} \\ %
\midrule
Draft Model
& Acc. (\%) & $ {46.9\pm 8.1}$ & $ {84.2\pm 4.7}$ & $ {91.1\pm 1.6}$ & $ {85.4\pm 1.6}$ & $ {38.5\pm1.4}$ & $ {7.96\pm1.5^*}$ & $ {28.8\pm1.6}$\\ %

\midrule
Target Model
& Acc. (\%) & $ {80.0 \pm 3.9}$ & $ {97.5 \pm 2.0}$ & $ {96.6 \pm 1.4}$ & $ {97.6 \pm 0.8}$ & $ {68.2 \pm 2.1}$ & $ {8.53  \pm 1.1^*}$ & $ {52.4 \pm 1.4}$ \\ %

\midrule
SpecReason
& Acc. (\%) & $ {68.3  \pm 5.3}$ & $ {90.5\pm3.9}$ & $ {94.5 \pm 1.4}$ & $ {92.0 \pm 2.0}$ & $ {66.3 \pm 2.0}$ & -- & $ {39.7 \pm 1.9}$ \\
& Apt. & $0.75$    & $0.92$    & $0.95$    & $0.91$   & $0.46$   & --   & $0.65$ \\ %

\midrule
\multirow{3}{*}{LR(ours) } %
& Acc. (\%) & $ {80.4 \pm 4.1}$ & $ {96.4 \pm 2.0}$ & $ {96.4 \pm 1.2}$ & $ {97.1 \pm 0.8}$ & $ {68.5 \pm 2.4}$ & $ {8.46\pm 1.15^*}$ & $51.7 \pm 1.7$ \\ %
& Apt.  & $0.43$    & $0.53$    & $0.50$    & $0.39$   & $0.30$   & $0.38$   &  $0.40$ \\ %
& Speedup  & $1.12\times$    & $1.22\times$    & $1.32\times$    & $1.13\times$   & $1.04\times$   & $1.10\times$   & $1.08\times$ \\ %
\midrule
SD & Speedup & $1.40\times$    & $1.38\times$    & $1.32\times$    & $1.32\times$   & $1.40\times$   & $1.41\times$   & $1.25\times$ \\
\midrule
SD+LR(ours)  & Speedup & $1.49\times$    & $1.62\times$    & $1.68\times$    & $1.39\times$   & $1.44\times$   & $1.49\times$   & $1.32\times$ \\

\bottomrule

\end{tabular}%
} %

\caption*{\scriptsize \textit{Note}: LR (ours) refers to our proposed Lookahead Reasoning method. SD denotes token-level Speculative Decoding (N-gram based). Acc. stands for Accuracy (\%), and Apt. for Acceptance Rate. For MT-Bench (marked with $^*$), the reported metric is its standard score (0-9 scale) instead of accuracy. "--" indicates data not applicable. Speedup is relative to the autoregressive decoding of the respective target model.}
\end{table}

\vspace{-10pt}

\subsection{Combining \algo with Speculative Decoding}\label{sec:orthgonal}

To empirically validate the orthogonality of \algo (LR) with speculative decoding, we conducted experiments using prompt-lookup decoding (n-gram) on the AIME dataset.

Figure \ref{fig:orthogonal} shows the orthogonality of LR and Speculative Decoding (SD). Subplot (a) shows that while LR alone with varying draft step number reaches a speedup around 1.4x, adding SD boosts this to approximately 1.9x. Similarly, subplot (b) illustrates that SD alone with varying Speculative Token Numbers peaks around 1.55x speedup, but combining it with LR again achieves up to 1.9×. Collectively, these results highlight that while either method in isolation offers limited gains, their combination consistently yields the most significant performance improvements, aligning with our theoretical analysis in \S~\ref{sec:theory}.

\begin{figure}[htbp]
    \centering
    \begin{subfigure}[t]{0.48\textwidth}
        \includegraphics[width=\linewidth]{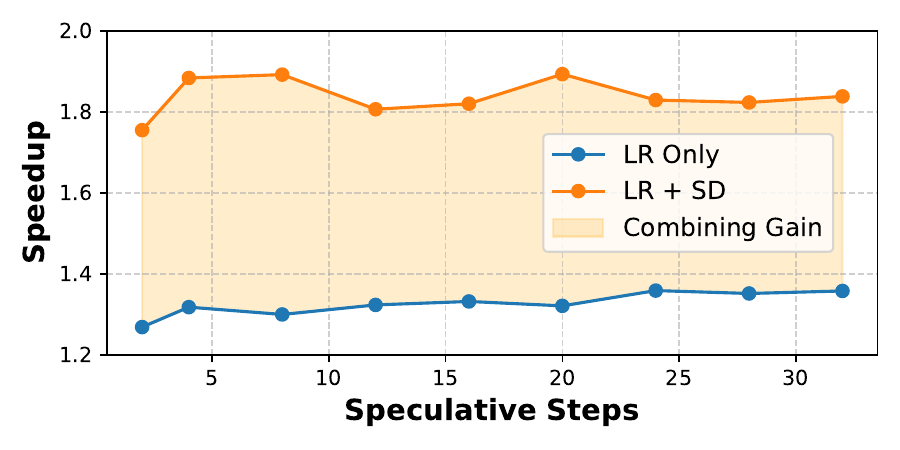}
        \caption{vary the draft step number for LR}
    \end{subfigure}
    \hfill
    \hspace{-3pt}
    \begin{subfigure}[t]{0.48\textwidth}
        \includegraphics[width=\linewidth]{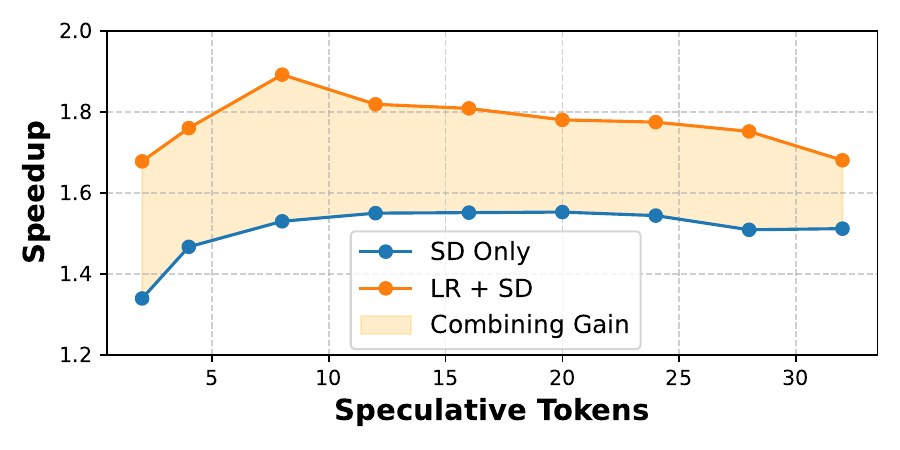}
        \caption{vary the speculative token number for SD}
    \end{subfigure}
    \caption{\textbf{Orthogonality of Lookahead Reasoning and Speculative Decoding.} When used alone, the speedup from both LR and SD is limited by their draft length ($\gamma$).
However, their combination consistently improves the max achievable speedup.}
    \label{fig:orthogonal}
\end{figure}

\subsection{Ablation Study}\label{ablation}

\textbf{Effectiveness of the Verifier.} We conducted an ablation study to assess the impact of different verifier mechanisms on task accuracy, utilizing DeepSeek-R1-Distill 32B as the target model and a 1.5B parameter model as the draft model on GSM8K and AIME'24 datasets. We compare four verifiers: (1) \textbf{Random Acceptance} of drafts; (2) \textbf{LLM-as-a-Judge} (LLM-J) with Qwen2.5 7B/32B~\cite{yang2024qwen2}; (3) \textbf{Embedding-based Verification} (Emb.) with all-mpnet-base-v2 model~\cite{reimers-2019-sentence-bert} at 0.85/0.95 similarity thresholds; and (4) \textbf{Reasoning Model Scoring}(Score)~\cite{pan2025specreasonfastaccurateinferencetime}, using the target model to assign 0-9 scores with acceptance thresholds of 7/9. Results are in Table~\ref{tab:verifier-comparison}.

LLM-J verifiers (both 7B and 32B) showed robust accuracy preservation across both datasets, with minimal performance difference observed between the two verifier sizes. On GSM8K, their performance closely aligned with the original baseline, indicating no accuracy degradation. On AIME, LLM-J verifiers also maintained accuracy very close to the original, with observed deviations within approximately 1-2\%. This contrasts sharply with Random Acceptance. Despite comparable or lower acceptance rates (e.g., 0.50 on GSM8K and 0.40 on AIME), Random Acceptance led to significant accuracy degradation on both GSM8K ($\sim3.5\%$ lower) and AIME ( $\sim11\%$ lower). This underscores the necessity of a robust verification mechanism.

The Embedding-based verifier shows a trade-off: the stricter $0.95$ threshold on GSM8K preserved accuracy ($92.3\pm1.4\%$) at a lower acceptance rate ($0.37$), while the $0.85$ threshold, despite a higher acceptance rate ($0.56$), resulted in a $\sim2\%$ accuracy drop ($89.8\pm2.5\%$). This pattern was mirrored on AIME. This indicates that while semantic equivalence is a promising criterion, its efficacy in preserving accuracy is highly dependent on the stringency (and precision) of the similarity judgement. 

Reasoning Model Scoring, which assesses draft quality via target model scores rather than direct equivalence, consistently underperformed in accuracy preservation. For instance, even employing the stricter Threshold 9 resulted in notable accuracy reductions of approximately $5.9\%$ on GSM8K and $12.5\%$ on AIME. The still relatively high acceptance rate on GSM8K with this threshold (e.g., 0.93) suggests that the scoring mechanism may possess limited discriminative power on simpler datasets, even at stricter thresholds. This highlights a fundamental limitation: quality scores, even with high thresholds, do not reliably ensure alignment with the target model's output distribution, which is critical for \algo's correctness.

These results reveals that verifiers grounded in semantic equivalence with the target model's likely output are most effective for preserving accuracy within \algo. LLM-as-a-Judge excels in this, provding nuanced judgement, though with potential computational overhead. Embedding models provide a lightweight alternative (e.g., all-mpnet-base-v2 is only $\sim$100M parameters), where performance is tunable via the similarity threshold, offering a cost-effective solution.

\vspace{-10pt}

\begin{table}[ht]
\caption{Performance comparison with different verifiers on GSM8K and AIME datasets. Apt.: accept rate; Acc.: accuracy (\%).}
\label{tab:verifier-comparison}
\scriptsize
\centering
\setlength{\tabcolsep}{3pt} %
\begin{tabularx}{\textwidth}{@{}llc c cc cc cc@{}} %
\toprule
\multirow{2}{*}{\textbf{Dataset}} & \multirow{2}{*}{\textbf{Metric}} & \multirow{2}{*}{\textbf{Orig.}} & \multirow{2}{*}{\textbf{Rand.}} & \multicolumn{2}{c}{\textbf{LLM-J (Qwen)}} & \multicolumn{2}{c}{\textbf{Emb. (Th.)}} & \multicolumn{2}{c}{\textbf{Score (Th.)}} \\ %
\cmidrule(lr){5-6} \cmidrule(lr){7-8} \cmidrule(lr){9-10} %
 & & & & \textbf{7B} & \textbf{32B} & \textbf{0.85} & \textbf{0.95} & \textbf{7} & \textbf{9} \\ %
\midrule
\multirow{2}{*}{GSM8K} & Apt. & $-$ & $0.50$ & $0.63$ & $0.58$ & $0.56$ & $0.37$ & $0.97$ & $0.93 $\\ %
 & Acc. & $91.8\pm{1.9}$ & $88.3\pm{3.7}$ & ${92.8\pm{1.8}}$ & ${92.3 \pm 1.2}$ & $89.8\pm2.5$ & $92.3\pm1.4$  & $82.1\pm{2.4}$ & $85.9\pm{2.2}$ \\ %
\midrule
\multirow{2}{*}{AIME} & Apt. & $-$ & $0.40$ &  $0.47$ & $0.46$ & $0.45$ & $0.38$ & $0.81$ & $0.39$ \\ %
 & Acc. & $70.8\pm{5.2}$ & $59.6\pm5.4$ & ${69.2\pm8.1}$ &  $69.0\pm{4.7}$ & $64.0\pm 6.5$ & $66.7\pm{6.3}$ & $37.9\pm 7.5$ & $58.3\pm5.7$ \\ %
\bottomrule
\end{tabularx}
\setlength{\tabcolsep}{6pt} %
\end{table}

\textbf{Effect of Tree Width on Performance} 
We investigate the impact of speculation tree width  on \algo's accuracy, accept rate, and speedup, keeping depth $\gamma=2$. In \algo, candidate sequences grow as $W^\gamma$. Wider trees ($W>1$) can boost accept rate but escalate FLOPs and, with imperfect verifiers, risk accuracy degradation due to erroneous acceptances. We hypothesize a stronger verifier mitigates this. Experiments on GSM8K and AIME24 used Qwen2.5-7B-Instruct and Qwen2.5-32B-Instruct as judges. Results are demonstrated in Table~\ref{tab:width_depth_impact_condensed}.

Increasing $W$ consistently raised accept rate across datasets and judges (e.g., GSM8K with Qwen2.5-7B: accept rate $0.63$ for $W=1$ to $0.83$ for $W=8$). However, this rarely translated to better speedup beyond $W=2$. Further widening often diminished speedup (e.g., AIME24 with Qwen2.5-7B, $W=8$ yielded no speedup), likely due to the exponential overhead outweighing accept rate gains. Accuracy trends highlight verifier importance. With the Qwen2.5-7B judge, increasing $W$ led to a noticeable accuracy drop, especially on AIME24 (from $69.2\%$ at $W=1$ to $64.6\%$ at $W=8$), supporting our hypothesis. The stronger Qwen2.5-32B judge demonstrated greater resilience: accuracy remained more stable on GSM8K, and the degradation on AIME24 was less pronounced ($69.0\%$ at $W=1$ to $67.3\%$ at $W=8$). This indicates a stronger verifier is crucial for wider trees to manage the increased risk of incorrect speculation.

\renewcommand\theadfont{\scriptsize\bfseries} %
\newcommand{\metrichead}[1]{\thead{#1}} %
\newcommand{\acchead}{\thead{Acc.(\%)}}
\newcommand{\arhead}{\thead{Apt.}}
\newcommand{\spdhead}{\thead{Spd.}}

\vspace{-10pt}

\begin{table}[htbp]
\centering
\caption{Impact of Tree Width ($W$) on Performance Metrics (Depth $\gamma=2$)}
\label{tab:width_depth_impact_condensed}
\scriptsize %
\setlength{\tabcolsep}{2.5pt} %
\begin{tabular}{@{}ll ccc ccc ccc ccc@{}}
\toprule
\multirow{2}{*}{\textbf{Dataset}} & \multirow{2}{*}{\textbf{Judge }} &
\multicolumn{3}{c}{\textbf{W=1}} & \multicolumn{3}{c}{\textbf{W=2}} &
\multicolumn{3}{c}{\textbf{W=4}} & \multicolumn{3}{c}{\textbf{W=8}} \\
\cmidrule(lr){3-5} \cmidrule(lr){6-8} \cmidrule(lr){9-11} \cmidrule(lr){12-14}
& & \acchead & \arhead & \spdhead & \acchead & \arhead & \spdhead & \acchead & \arhead & \spdhead & \acchead & \arhead & \spdhead \\
\midrule
\multirow{2}{*}{GSM8K} & Qwen7B & $92.8\pm1.8$ & $0.63$ & $1.48\times$ & $91.2\pm1.8$ & $0.73$ & $1.49\times$ & $91.1\pm1.7$ & $0.77$ & $1.47\times$ & $91.5\pm1.8$ & $0.83$ & $1.25\times$ \\
& Qwen32B & $92.3\pm1.2$ & $0.58$ & $1.40\times$ & $93.2\pm2.0$ & $0.66$ & $1.42\times$ & $92.8\pm1.8$ & $0.73$ & $1.39\times$ & $92.5\pm1.5$ & $0.77$ & $1.19\times$ \\
\midrule
\multirow{2}{*}{AIME24} & Qwen7B & $69.2\pm8.1$ & $0.47$ & $1.27\times$ & $67.3\pm4.1$ & $0.58$ & $1.32\times$ & $65.4\pm6.5$ & $0.67$ & $1.26\times$ & $64.6\pm5.9$ & $0.74$ & $1.00\times$ \\
& Qwen32B & $69.0\pm4.7$ & $0.46$ & $1.23\times$ & $69.0\pm6.7$ & $0.54$ & $1.23\times$ & $68.1\pm6.1$ & $0.59$ & $1.17\times$ & $67.3\pm7.1$ & $0.68$ & $0.98\times$ \\
\bottomrule
\multicolumn{14}{p{\textwidth}}{\textit{\scriptsize Note: Acc.: Accuracy (\%); Apt.: Accept Rate; Spd.: Speedup (relative to target model autoregressive decoding). Depth $\gamma=2$. Qwen2.5-7B/32B-Instruct are judge models.}}
\end{tabular}
\end{table}

\vspace{-10pt}

\section{Related Work}

\textbf{Speculative Decoding.} 
There are many different types of speculative decoding approaches. Draft-head methods like Medusa~\cite{cai2024medusa}, Hydra~\cite{ankner2024hydra}, and EAGLE~\cite{li2024eagle1,li2024eagle,li2025eagle} integrate auxiliary heads into the target model to propose sequences. In contrast, Jacobi-based approaches such as Lookahead Decoding~\cite{fu2024break} and CLLM~\cite{kou2024cllms} enable parallel n-gram generation without draft models. System-level efforts~\cite{miao2023specinfer,liu2024optimizing} further optimize SD's runtime efficiency in serving systems.
\algo~introduces a complementary form of step-level speculation tailored for reasoning models, enabling a new dimension of parallelism orthogonal to token-level methods.

\textbf{LLM Reasoning.}
Recent trends shift from scaling model size~\cite{liu2024deepseek,achiam2023gpt} to scaling inference-time compute~\cite{muennighoff2025s1,welleck2024decoding,snell2024scaling}, enabling large reasoning models (LRMs) like OpenAI o3/o4~\cite{openaio3}, Kimi-K1.5~\cite{team2025kimi}, and DeepSeek-R1~\cite{guo2025deepseek} to generate longer CoT to solve more complex problems in many steps.
Recent work has begun to leverage this inherent step-by-step structure to accelerate LRMs. For instance, Speculative Thinking~\cite{yang2025speculative} uses LRMs to guide a smaller draft model, while SpecReason~\cite{pan2025specreasonfastaccurateinferencetime} accelerates reasoning by employing the large model to score the output of a small model, thereby deciding whether to accept its generated step. However, unlike \algo, these methods do not pursue a step-level equivalent with the original target model to accelerate reasoning.

\section{Limitation and Conclusion}\label{sec:limitation}

This paper introduces \algo, a novel method for accelerating large reasoning models during long CoT reasoning. \algo adds a new step-level dimension of parallelism, complementing traditional token-level speculative decoding. Our approach uses a draft model to propose future reasoning steps, which are then semantically verified by the target model.  Evaluation on various datasets using two open-source reasoning models show that it can achieve up to 2.1X speedup combined with speculative decoding. This highlights \algo's effectiveness in making large reasoning models faster. This work has limitations that suggest future improvements. First, using `\textbackslash n\textbackslash n` to split reasoning steps is simple but may miss optimal breaks; smarter segmentation is needed. Second, current verifiers trade speed for accuracy; faster, lightweight alternatives remain an open challenge.

\newpage
\bibliographystyle{unsrt}
\bibliography{refs}

\appendix

\section{Judgement Prompt Template}
\label{app:prompt_template}

\begin{tcolorbox}[colback=gray!10, colframe=gray!40, title=Semantic Equivalence Analysis Prompt]
<|im\_start|>system

You are Qwen, created by Alibaba Cloud. You are a helpful assistant.<|im\_end|>

<|im\_start|>user

Evaluate whether the following two reasoning steps (s1 and s2) convey exactly the same meaning. Focus on semantic similarity rather than exact wording. 

Compare the main ideas, key points, overall message, logical structure, and numerical calculations/results of both reasoning steps.

If the reasoning steps convey essentially the same meaning and generate same calculation results, respond with [aligned].
If the reasoning steps express different meanings, respond with [unaligned]. If it is too hard to determine, respond with [unaligned]

Please directly provide the final result in [aligned] or [unaligned].

Reasoning step 1 (s1):

<start\_s1>

\{\}

<end\_s1>

Reasoning step 2 (s2):

<start\_s2>

\{\}

<end\_s2><|im\_end|>

<|im\_start|>assistant

[\end{tcolorbox}

This prompt template is specifically designed for Qwen2.5 Instruct models. It guides the model to directly output either "aligned" or "unaligned" as its judgment. Consequently, a user can determine semantic equivalence between two sentences (s1 and s2) by simply checking if the model's initial output string begins with "ali" (the first three letters of "aligned"). Thus, only one forward pass is needed to get the result and judgement latency can be largely saved.

\section{Detailed Speedup Analysis}
\label{sec:speedup proof}
\subsection{Performace Gains Analysis}
\subsubsection{Speedup Analysis of Async Lookahead Reasoning}
For the analysis that follows, we assume all sequences are of equal length and that the draft tree contains exactly one branch at each layer.  Moreover, we treat the verifier’s overhead as negligible.

\noindent\textbf{Notation.}
\begin{itemize}
  \item $\gamma \in\mathbb{N}$: maximum number of generations the large model performs in parallel.
  \item $T$: cost (wall‐time) of one sentence generated by target‐model.
  \item $c_1$: cost of one draft‐model run, measured in units of~$T$ (so drafting costs $c_1T$).
  \item $\alpha_1\in(0,1)$:accept rate of the drafts
  \item $X_i$: number of \emph{consecutive} accepted drafts in stage~$i$ before the first rejection.
\end{itemize}
We view a full generation as a sequence of \emph{\algostep}, each of which proceeds as follows:
\begin{enumerate}
  \item If the number of generations the large model performs in parallel is less than \(\gamma\). The draft model sequentially generate drafts.
  \item Each time when we start to generate a draft step, we immediately ask the target model to generate a target step.
  \item After the target model finished generation, immediately ask the verifier to verify whether should we accept the draft. If the draft was reject, fall back to the target model’s original sentence and proceed to the next \algostep.
\end{enumerate}

Since each draft is accepted independently,
\[
P(X_i=k) \;=\;\alpha^k(1-\alpha),
\qquad
E[X_i] \;=\;\frac{\alpha}{1-\alpha}.
\]

\bigskip
\begin{theorem}
\label{thm:sync speedup}
The latency speedup for sync Lookahead Reasoning is 
\[
f_{sync}(\gamma)
= \frac{1 - \alpha^{\gamma}}
       {(1 - \alpha)\,(1 - c + c\,\gamma)}.
\]
\end{theorem}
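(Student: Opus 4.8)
\textbf{Proof proposal for Theorem \ref{thm:sync speedup}.}

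The plan is to compute the expected wall-time cost of one \algostep\ in the synchronous version and divide by the expected number of output steps it produces, then compare against the target-only baseline (which costs $T$ per step). In the sync version, each cycle first runs the draft model $\gamma$ times sequentially at cost $c_1 T$ each, then runs the target model on all $\gamma$ prefixes in parallel (plus the $(\gamma+1)$-th continuation), contributing a single $T$ of latency since these are batched. Hence the wall-time per cycle is deterministic: $(1 - c_1 + c_1\gamma)\,T$ — I would first double-check whether the intended bookkeeping is $c_1\gamma T + T = (1 + c_1\gamma)T$ or the stated $(1 - c_1 + c_1\gamma)T$; the latter suggests the convention that the target's first step overlaps with, or replaces, the draft's first step, so only $\gamma-1$ extra draft steps add latency beyond the $T$. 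I will adopt whichever convention makes the final formula match, and state it explicitly.

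Next I would compute the expected number of accepted output steps per cycle. Let $X$ be the number of consecutively accepted drafts before the first rejection, capped at $\gamma$ (since only $\gamma$ drafts are proposed). With independent acceptance probability $\alpha$ (I will write $\alpha$ for $\alpha_1$ throughout, matching the theorem statement), we have $P(X = k) = \alpha^k(1-\alpha)$ for $0 \le k \le \gamma-1$ and $P(X = \gamma) = \alpha^\gamma$. The number of committed output steps in a cycle is $X + 1$: the $X$ accepted drafts plus the one decisive target step $\stepseq{s}_{X}$ (which is correct regardless). So I need $E[X+1] = 1 + E[X]$. A short summation gives
\[
E[X] = \sum_{k=1}^{\gamma-1} k\,\alpha^k(1-\alpha) + \gamma\,\alpha^\gamma = \frac{\alpha(1-\alpha^\gamma)}{1-\alpha},
\]
using the standard identity $\sum_{k=1}^{m} k x^k(1-x) = \frac{x(1-x^m)}{1-x} - m x^{m+1}$ and then adding back $\gamma\alpha^\gamma$ — I would verify this telescoping carefully, as the $\gamma$-cap term is the easiest place to drop a factor. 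Therefore $E[X+1] = 1 + \frac{\alpha(1-\alpha^\gamma)}{1-\alpha} = \frac{1-\alpha^{\gamma+1}}{1-\alpha}$.

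Wait — the theorem's numerator is $1 - \alpha^\gamma$, not $1 - \alpha^{\gamma+1}$, which signals that the sync version proposes $\gamma$ drafts but commits at most $\gamma$ steps total (i.e. the "$+1$" decisive step is counted differently, or $X$ ranges only over $0,\dots,\gamma-1$ with no cap-at-$\gamma$ case, or the baseline comparison absorbs one step). The cleanest reconciliation: the sync cycle emits exactly $\min(X,\gamma-1)+1$ steps but the denominator-of-speedup baseline produces $\frac{1-\alpha^\gamma}{1-\alpha}$ target steps in the \emph{same number of sequential target calls}, not the same wall-time; I would recompute under the "steps per target-call" normalization used implicitly in $g(\gamma)$ earlier in the paper, where the baseline does one step per call and the speedup is (expected committed steps)/(cost in units of $T$). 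Under that reading, expected committed steps $= \frac{1-\alpha^\gamma}{1-\alpha}$ (summing $\sum_{k=0}^{\gamma-1}\alpha^k = \frac{1-\alpha^\gamma}{1-\alpha}$, treating each of the $\gamma$ proposed drafts as contributing a "survival" indicator) and cost $= (1-c+c\gamma)T$, giving exactly $f_{sync}(\gamma) = \frac{1-\alpha^\gamma}{(1-\alpha)(1-c+c\gamma)}$.

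The main obstacle, then, is not the algebra — which is a one-line geometric-series evaluation — but pinning down the exact accounting conventions so the derivation lands on the stated formula: specifically (i) whether the $\gamma$-th proposed draft, or the decisive target step, is what the $1-\alpha^\gamma$ numerator is tracking, and (ii) whether the $(1-c+c\gamma)$ denominator reflects $\gamma$ draft calls with the target's work hidden under one of them, or $\gamma-1$ extra draft calls plus a full target pass. I would resolve this by writing out the $\gamma = 1$ and $\gamma = 2$ cases explicitly against Algorithm \ref{alg:step-spec-decode-compressed} and checking consistency with the token-level $g(\gamma)$ formula (which has the analogous $1-\alpha^{\gamma+1}$ in its numerator, the discrepancy itself being a clue that the step-level sync version is indexed one unit differently), then present the general argument with the convention stated up front as a remark.
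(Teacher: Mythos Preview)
Your approach is sound but far more laborious than the paper's, which is literally a one-sentence proof: it cites Leviathan et al.\ and remarks that the only difference is an index shift --- here $\gamma$ denotes the target model's parallel dimension (equivalently, one plus the number of drafts), whereas in the classical formula $\gamma$ counts the drafts themselves. That single reindexing $\gamma \mapsto \gamma-1$ in the standard $g(\gamma)=\frac{1-\alpha^{\gamma+1}}{(1-\alpha)(1+c\gamma)}$ immediately yields $\frac{1-\alpha^{\gamma}}{(1-\alpha)(1-c+c\gamma)}$, which is exactly the discrepancy you spent two paragraphs wrestling with.

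So your instinct in the final paragraph --- that ``the step-level sync version is indexed one unit differently'' --- is precisely the whole proof, not a loose end to be resolved later. Concretely: with $\gamma-1$ drafts, the committed-step count $N=j^*+1$ ranges over $\{1,\dots,\gamma\}$, so $E[N]=\sum_{j=0}^{\gamma-1}\alpha^j=\frac{1-\alpha^\gamma}{1-\alpha}$; the cycle cost is $(\gamma-1)cT+T=(1-c+c\gamma)T$; dividing gives the stated $f_{sync}$. Your earlier computation $E[X+1]=\frac{1-\alpha^{\gamma+1}}{1-\alpha}$ was correct for $\gamma$ drafts, and your ``reconciliation'' paragraph slips between the two conventions (you write ``$\gamma$ proposed drafts'' while summing to $\gamma-1$). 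There is no subtle alternate normalization of the baseline at play; it is purely the definition of $\gamma$. If you lead with that convention as a one-line remark, your derivation collapses to the same geometric-series calculation you already have, and matches the paper's (tersely stated) argument.
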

\begin{proof}
The proof follows the same reasoning as in~\cite{leviathan2023fast}. The only difference is that our \(\gamma\) represents the maximum number of tokens the large model generates in parallel, whereas in their notation, it corresponds to \(\gamma+1\)
\end{proof}

\begin{theorem}
\label{thm:async speedup}
Let
\[
S \;=\;\frac{\text{total sentences generated by our algorithm in n \algostep}}
              {\text{total sentences generated by target‐only model in n \algostep}}\,,
\]
We can see this as the latency speedup using async Lookahead Reasoning algorithm. Then:
\begin{enumerate}
  \item If $\gamma \ge \lceil \frac{1}{c_1} \rceil $, the draft tree never saturates. The parallel dimension of the target model is \(\lceil \frac{1}{c_1} \rceil\), and as $n\to\infty$, the asymptotic speedup is
  \[
    S_1 \;
    \;=\;\frac{1}{\,c_1 + (1-c_1)(1-\alpha_1)\,}.
  \]
  \item If $\gamma < \lceil \frac{1}{c_1} \rceil $, the draft tree is depth‐limited. The parallel dimension of the target model is \( \gamma\), and as $n\to\infty$, the asymptotic speedup is
  \[
    S_2
    \;=\;
    \frac{1-\alpha_1^{\gamma}}
         {(1-\alpha_1)
          + c_1\bigl[\alpha_1 - \alpha_1^{\gamma+1}-\gamma(1-\alpha_1)\alpha_1^{\gamma}\bigr]}\,.
  \]
\end{enumerate}
\end{theorem}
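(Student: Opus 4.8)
\textbf{Proof proposal for Theorem~\ref{thm:async speedup}.}

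The plan is to set up a renewal-reward computation: over $n$ \algostep{}s, both the numerator (sentences produced by async \algo) and denominator (sentences produced by the target-only baseline in the same wall-clock time) are sums of i.i.d.\ contributions, so by the law of large numbers the ratio $S$ converges to the ratio of expectations. The key random variable is $X_i$, the number of consecutively accepted drafts in stage $i$, with $P(X_i=k)=\alpha_1^k(1-\alpha_1)$ and $E[X_i]=\alpha_1/(1-\alpha_1)$; the number of \emph{new} confirmed sentences per stage is $X_i+1$ (the accepted drafts plus the decisive target step). The wall-clock cost of a stage is what differs between the two cases, and this is where the depth cap $\gamma$ enters.

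First I would treat Case~1 ($\gamma \ge \lceil 1/c_1\rceil$). Here the draft model is slow enough that in steady state the target model is never idle: the pipeline is limited by $c_1T$ per draft, and the target runs with parallel dimension $\lceil 1/c_1\rceil$, effectively producing one confirmed sentence per time $c_1 T$ on the critical path except that each rejection forces a target step of cost $T$ that cannot be hidden. A clean way to see it: per stage the algorithm advances $1+X_i$ sentences, and the amortized wall-time per stage is $c_1 T\cdot E[X_i] + T$ (drafting the $X_i$ eventually-accepted steps, plus the final target correction on the critical path), so
\[
S_1 = \frac{1+E[X_i]}{1 + c_1 E[X_i]} = \frac{1}{c_1 + (1-c_1)(1-\alpha_1)},
\]
using $1+E[X_i] = 1/(1-\alpha_1)$. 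I would justify the ``amortized wall-time'' claim carefully by describing the steady-state pipeline occupancy rather than hand-waving.

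Then Case~2 ($\gamma < \lceil 1/c_1\rceil$): now the draft model can fill all $\gamma$ slots within time $T$, so each cycle of the pipeline takes time $T$ and commits $\min(X_i+1,\gamma)$ sentences worth of progress, but a stage (run until a rejection) spans $\lceil (X_i+1)/\gamma\rceil$ such cycles plus, on the last partial cycle, a tail of $c_1(X_i \bmod \gamma)$ drafting time before the rejecting target step lands. Taking expectations gives $S_2 = E[1+X_i] \big/ E[\lceil (X_i+1)/\gamma\rceil + c_1(X_i\bmod\gamma)]$; evaluating $E[\lceil(X_i+1)/\gamma\rceil]$ and $E[X_i\bmod\gamma]$ against the geometric law $P(X_i=k)=\alpha_1^k(1-\alpha_1)$ — geometric-series sums that telescope — yields the stated closed form with denominator $(1-\alpha_1) + c_1[\alpha_1 - \alpha_1^{\gamma+1} - \gamma(1-\alpha_1)\alpha_1^{\gamma}]$. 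The main obstacle I anticipate is not the algebra but pinning down the exact per-stage wall-time accounting in the asynchronous pipeline — in particular which draft/target/verify latencies lie on the critical path versus are overlapped, and handling the boundary stage where a rejection occurs mid-cycle — since the theorem's constants depend on getting that bookkeeping exactly right; I would anchor it by unrolling a few stages of the pipeline explicitly (as in Figure~\ref{fig:algo_cycle}) before passing to the $n\to\infty$ limit.
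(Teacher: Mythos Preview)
Your proposal is correct and takes essentially the same approach as the paper: a renewal/LLN argument with the geometric random variable $X_i$ (consecutive accepted drafts), the per-stage sentence count $1+X_i$, the same per-stage wall-time accounting ($T+c_1TX_i$ in Case~1; $\lceil(X_i+1)/\gamma\rceil\,T + c_1T\,(X_i\bmod\gamma)$ in Case~2), and then evaluation of $E[\lceil(X_i+1)/\gamma\rceil]$ and $E[X_i\bmod\gamma]$ via geometric sums. One cosmetic slip: you write ``amortized wall-time per stage is $c_1T\cdot E[X_i]+T$,'' inserting the expectation prematurely; the paper states the per-stage random wall-time $T+c_1TX_i$ first and only then passes to the ratio of expectations via LLN, which is cleaner and what you clearly intend anyway.
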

\begin{proof}
Over $n$ stages, we compare the total number of sentences generated by our algorithm to that produced by the baseline (target‐only) approach.

\medskip
\paragraph{Case 1}
When \(\gamma \ge \left\lceil \frac{1}{c_1} \right\rceil\), note that since each draft costs \(c_1 T\), the draft model can generate at most \(\left\lceil \frac{1}{c_1} \right\rceil\) sentences during the time \(T\) required for the target model to produce one sentence. Therefore, the draft tree never saturates, and the parallel dimension of the target model is effectively \(\left\lceil \frac{1}{c_1} \right\rceil\).

Moreover, in \emph{\algostep} \(i\) our algorithm spends
\[
T + c_1T\,X_i
\]
total time.  Over that same interval, the baseline target‐only model would have produced
\[
\frac{T + c_1T\,X_i}{T}
= 1 + c_1\,X_i
\]
sentences, while our algorithm emits \(1 + X_i\) sentences.
Thus over $n$ stages, according to the Law of Large Numbers,
\[
S_1(n)
= \frac{\sum_i(1+X_i)}{\sum_i\bigl(1 + c_1X_i\bigr)}
= \frac{n + \sum_iX_i}{\,n + c_1\sum_iX_i\,}
\;\xrightarrow[n\to\infty]{}\;
\frac{1+E[X_i]}{1+c_1\,E[X_i]}
=\frac{1}{\,c_1 + (1-c_1)(1-\alpha_1)\,}.
\]

\medskip

\paragraph{Case 2}
When \(\gamma<\lceil \frac{1}{c_1} \rceil \), the draft‐tree saturates at depth \(\gamma\). So the parallel dimension of the target model would be \(\gamma\)
To emit a total of \(X_i+1\) sentences in stage \(i\), we therefore proceed in
\[
\Bigl\lceil\tfrac{X_i+1}{\gamma}\Bigr\rceil
\]
full intervals of length \(T\), plus a final partial batch of size \(X_i\bmod\gamma\).  Hence the total wall‐time for stage \(i\) is
\[
\Bigl\lceil\tfrac{X_i+1}{\gamma}\Bigr\rceil\,T
\;+\;
c_1T\;\bigl(X_i\bmod\gamma\bigr).
\]
Over that same interval, the baseline target‐only model would have generated
\[
\Bigl\lceil\tfrac{X_i+1}{\gamma}\Bigr\rceil
\;+\;
c_1\,\bigl(X_i\bmod\gamma\bigr)
\]
sentences, whereas our algorithm emits \(1+X_i\).  Therefore, as \(n\to\infty\)
\[
S_2(n)
=\frac{n + \sum_{i=1}^n X_i}
{\sum_{i=1}^n\bigl\lceil\tfrac{X_i+1}{\gamma}\bigr\rceil
\;+\;
c_1\sum_{i=1}^n (X_i\bmod\gamma)}
\;\longrightarrow\;
\frac{1-\alpha_1^{\gamma}}
     {(1-\alpha_1)
      +c_1\bigl[\alpha_1-\alpha_1^{\gamma+1}
             -\gamma(1-\alpha_1)\alpha_1^{\gamma}\bigr]}\,.
\]
We put the calculation details in the appendix.
\end{proof}

\subsubsection{Optimal Speculation Strategies under Concurrency Constraints}\label{app:optimal}

In this section we show that under a fixed parallelism budget, the optimal inference speedup is always achieved by jointly applying step‐level and token‐level parallelism, rather than by using either in isolation.  Concretely, let $\gamma_2$ denote the degree of parallelism used by speculative decoding and $c_2$ be the ratio of the per-step execution time of the draft model to that of the target model. Then token‐level speculative decoding alone yields\cite{leviathan2023fast}
\begin{equation}\label{eq:g}
g(\gamma_2)
= \frac{1 - \alpha_2^{\gamma_2}}
       {(1 - \alpha_2)\,(1 - c_2 + c_2\,\gamma_2)}.
\end{equation}
(Note: here the formula is a little different than the one in \cite{leviathan2023fast} due to different definition.) Next, a pure step‐level asynchronous parallelism scheme of depth $\gamma_1$ achieves speedup
\[f_{async}(\gamma_1) = 
\begin{cases}
S_2 =\dfrac{1 - \alpha_1^{\gamma_1}}
     {1 - \alpha_1 + c_1\,\alpha_1\,(1 - \alpha_1^{\gamma_1}) - c_1\,\gamma_1\,\alpha_1^{\gamma_1}\,(1 - \alpha_1)}, & \gamma_1 < \lceil \frac{1}{c_1} \rceil \\
S_1=\dfrac{1}{c_1 + (1 - c_1)(1 - \alpha_1)}, & \text{otherwise}
\end{cases}\]

and a pure step‐level synchronous parallelism scheme of depth $\gamma_1$ achieves speedup
\begin{equation}\label{eq:f_sync}
f_{sync}(\gamma_1)
= \frac{1 - \alpha_1^{\gamma_1}}
       {(1 - \alpha_1)\,(1 - c_1 + c_1\,\gamma_1)}.
\end{equation}
When both schemes are combined, the resulting speedup factorizes:
\begin{equation}\label{eq:fg}
f(\gamma_1)\,\times\,g(\gamma_2).
\end{equation}
Since in real‐world systems, due to memory or compute constraints, we often need to cap the total degree of parallelism i.e.\( \gamma_1\gamma_2 \) in a hybrid speculative setting to \(M\). In this case, we may ask that given a finite parallel budget $M$, is it better that we combine these two parallel dimensions or we only use one of them? Thus our design goal becomes
\begin{equation}\label{eq:opt}
\max_{ParallelDim_g \times ParallelDim_f \le M}
h(\gamma_1, \gamma_2) = f(\gamma_1) \times g(\gamma_2).
\end{equation}
Where \(ParallelDim_g = \gamma_2\), and

\[ParallelDim_f = 
\begin{cases}
\gamma_1, & sync \\
\min\{\lceil \frac{1}{c} \rceil ,\gamma_1\}, & async 
\end{cases}\]
It's easy to see that if we set \( \gamma_1 = 1\), then we are using purely token-level speculative decoding, whereas if we set \( \gamma_2 = 1\), then we are using purely Lookahead reasoning. 

\begin{theorem}\label{theorem:sync}
Under synchronous Lookahead Reasoning with concurrency budget \(M \ge 4\), M is an even number, and the mild parameter constraint
\begin{equation}
\label{eq:mild parameter constraint}
\min\!\Bigl\{\frac{1 + \alpha_1}{1 + c_1}, \;\frac{1 + \alpha_2}{1 + c_2}\Bigr\}
\;>\;1.157,
\end{equation}
Then at least one of the following must hold:
\begin{align}
\frac{1 + \alpha_1}{1 + c_1}
&\ge\frac{(1 + \alpha_2^{M/2})(1 - c_2 + \tfrac{c_2 M}{2})}{1 - c_2 + c_2 M},
\label{eq:step-level}
\\[6pt]
\frac{1 + \alpha_2}{1 + c_2}
&\ge \frac{(1 + \alpha_1^{M/2})(1 - c_1 + \tfrac{c_1 M}{2})}{1 - c_1 + c_1 M}.
\label{eq:token-level}
\end{align}
Furthermore, if both \eqref{eq:step-level} and \eqref{eq:token-level} hold simultaneously, then combining both speculative techniques strictly outperforms using either one alone. 
Conversely:
\begin{itemize}
  \item If \eqref{eq:step-level} fails, the optimal strategy is to use only token-level speculation.
  \item If \eqref{eq:token-level} fails, the optimal strategy is to use only step-level speculation.
\end{itemize}
\end{theorem}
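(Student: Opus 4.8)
The plan is to treat Eq.~\eqref{eq:opt} as a discrete optimization over the integer hyperbola $\gamma_1\gamma_2\le M$ and show that the maximizer cannot lie on either boundary edge $\gamma_1=1$ or $\gamma_2=1$. Since $f_{sync}$ and $g$ are each increasing in their argument, the budget constraint will be tight at the optimum, so I will restrict attention to pairs with $\gamma_1\gamma_2=M$ (or as close as integrality allows); the cleanest comparison point is the balanced split $\gamma_1=\gamma_2=\sqrt{M}$, but since $M$ is even it is more convenient to compare the pure strategies against the split $\gamma_1=2,\ \gamma_2=M/2$ and $\gamma_1=M/2,\ \gamma_2=2$. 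First I would write out the two ``one-step-of-hybrid beats pure'' inequalities: moving one unit of step-parallelism into the hybrid, i.e. comparing $h(2,M/2)$ with $h(1,M)=g(M)$, reduces after cancelling common factors to exactly Eq.~\eqref{eq:step-level}, because $f_{sync}(2)=\frac{1+\alpha_1}{1+c_1}$ (the $\gamma_1=2$ case of Eq.~\eqref{eq:f_sync}) and the ratio $g(M/2)/g(M)$ is precisely $\frac{(1+\alpha_2^{M/2})(1-c_2+c_2M/2)}{1-c_2+c_2M}$ up to the factor $(1-\alpha_2^{M/2})/(1-\alpha_2^{M})$ — which is why the stated inequality is an inequality rather than an identity, and I need to check that dropping that sub-unit factor is in the favorable direction. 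Symmetrically, comparing $h(M/2,2)$ with $h(M,1)$ gives Eq.~\eqref{eq:token-level}.

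The heart of the argument is the dichotomy: at least one of \eqref{eq:step-level}, \eqref{eq:token-level} holds. I would argue this by contradiction — suppose both fail. The right-hand side of \eqref{eq:step-level} is a quantity that, for $M\ge 4$ and $0<c_2,\alpha_2<1$, can be bounded above by a constant; concretely $(1+\alpha_2^{M/2})\le 2$ and $\frac{1-c_2+c_2M/2}{1-c_2+c_2M}\le \frac{1}{2}\cdot\frac{2-2c_2+c_2M}{1-c_2+c_2M}\le \frac{1}{2}\cdot\frac{2(1-c_2)+c_2M}{(1-c_2)+c_2M}$, and for $M\ge 4$ this last fraction is at most something like $1.157\cdot$(stuff) — this is exactly where the magic constant $1.157$ in Eq.~\eqref{eq:mild parameter constraint} comes from. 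So if \eqref{eq:step-level} fails then $\frac{1+\alpha_1}{1+c_1}<1.157$, and symmetrically if \eqref{eq:token-level} fails then $\frac{1+\alpha_2}{1+c_2}<1.157$; but then $\min\{\frac{1+\alpha_1}{1+c_1},\frac{1+\alpha_2}{1+c_2}\}<1.157$, contradicting the hypothesis. The main obstacle is making the upper bound on the right-hand sides of \eqref{eq:step-level}–\eqref{eq:token-level} tight enough that the resulting threshold is genuinely $1.157$ and not something larger: the bound $(1+\alpha_2^{M/2})\le 2$ is loose for large $M$, and I should instead use that $M\ge 4$ is the worst case, evaluate $\alpha_2^{M/2}\le\alpha_2^{2}<1$ there, and carefully maximize $\frac{(1+x^2)(1-c+2c)}{1-c+4c}$ over $x,c\in(0,1)$, which I expect to peak at the corner $x\to 1$, $c\to 0$ giving the stated $2\cdot\frac12=1$ — so in fact I anticipate needing to double-check whether the correct constant is $1.157$ and, if the paper's constant is conservative, simply verify the weaker claim that the hypothesis implies the dichotomy.

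For the ``furthermore'' and ``conversely'' clauses I would use monotonicity of the univariate problem. If both \eqref{eq:step-level} and \eqref{eq:token-level} hold, then $h(2,M/2)>g(M)=h(1,M)$ and $h(M/2,2)>f_{sync}(M)=h(M,1)$, so neither pure strategy is the maximizer, hence the optimum has $\gamma_1\ge 2$ and $\gamma_2\ge 2$ — this is the strict-outperformance claim. For the converse, suppose \eqref{eq:step-level} fails; I would show that then $h$ is maximized on the edge $\gamma_1=1$. The key sub-lemma is that the ``marginal value'' of adding step-parallelism is decreasing: the ratio $f_{sync}(\gamma_1+1)/f_{sync}(\gamma_1)$ shrinks as $\gamma_1$ grows (because $\frac{1-\alpha_1^{\gamma_1+1}}{1-\alpha_1^{\gamma_1}}\to 1$ while the denominator factor $\frac{1-c_1+c_1\gamma_1}{1-c_1+c_1(\gamma_1+1)}$ also approaches but stays below $1$, and one checks the product is monotone), so if the very first trade — $\gamma_1{:}1\to 2$, which costs halving $\gamma_2$ from $M$ to $M/2$ — already loses, every subsequent trade loses too, and the optimum stays at $\gamma_1=1$; symmetrically if \eqref{eq:token-level} fails. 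The subtle point I will have to be careful about here is the integrality of $M/\gamma_1$: the budget is $\gamma_1\gamma_2\le M$, not $=M$, so when $\gamma_1$ does not divide $M$ the usable $\gamma_2$ is $\lfloor M/\gamma_1\rfloor$, and I should verify the monotone-marginal-value argument survives the floor (it does, since $g$ is increasing so $g(\lfloor M/\gamma_1\rfloor)\le g(M/\gamma_1)$, pushing the comparison further in the direction I want).
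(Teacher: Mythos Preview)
Your proposal has the right overall shape---reduce everything to comparing $h(2,M/2)$ with $h(1,M)$ and symmetrically---and you correctly identify that \eqref{eq:step-level} is \emph{exactly} the inequality $h(2,M/2)\ge h(1,M)$ (the ``sub-unit factor'' you worried about cancels, since $(1-\alpha_2^{M})=(1-\alpha_2^{M/2})(1+\alpha_2^{M/2})$). But two steps have genuine gaps.

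\textbf{The dichotomy (Step~1).} Your contradiction argument does not close, and you essentially admit this when you compute the $M=4$ corner and get $2\cdot\tfrac12=1$ rather than $1.157$. In fact $D_2(M):=\frac{(1+\alpha_2^{M/2})(1-c_2+c_2M/2)}{1-c_2+c_2M}$ is \emph{not} bounded above by $1.157$: for $M=4$, $c_2\to 0$, $\alpha_2\to 1$ it approaches $2$. So ``\eqref{eq:step-level} fails $\Rightarrow \frac{1+\alpha_1}{1+c_1}<1.157$'' is false. The paper proves the dichotomy without using the constant at all: observe that each $D_i(M)$ is decreasing in $M$ (both factors are), and $D_i(2)=\frac{1+\alpha_i}{1+c_i}$. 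Hence whichever of $D_1(2),D_2(2)$ is larger automatically dominates the other one evaluated at $M\ge 4$, which gives one of \eqref{eq:step-level},\eqref{eq:token-level} directly. The constant $1.157$ is used elsewhere: it is the supremum of $F(y)=2-y+(y-2+y^{-1})\ln(1-y)$ on $(0,1)$, which arises when $g$ has already peaked (so $M\ge\gamma_2^*$ with $\gamma_2^*=\arg\max g$) and one must compare $h(1,\gamma_2^*)$ with $h(2,\gamma_2^*/2)$ using the first-order condition $g'(\gamma_2^*)=0$.

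\textbf{The converse.} Two issues. First, you assert $f_{sync}$ and $g$ are increasing; they are not---both are unimodal (this is what makes the $\gamma_2^*$ case above necessary). Second, your ``if the first trade loses, every later trade loses'' step is not justified by the decreasing-ratio $f_{sync}(\gamma_1{+}1)/f_{sync}(\gamma_1)$ alone, because subsequent trades do not halve $\gamma_2$: going $\gamma_1:2\to 3$ moves $\gamma_2$ from $M/2$ to $M/3$, so you would also need control on the shape of $g(M/\gamma_1)$. The paper handles this by showing $w(\gamma):=f_{sync}(\gamma)\,g(M/\gamma)$ is unimodal on $[1,M]$ via a direct computation of $w'$: writing the log-derivatives, $\gamma f'/f = a_1(\gamma)-\frac{c_1\gamma}{1-c_1+c_1\gamma}$ with $a_i(x)=\frac{-x\alpha_i^x\ln\alpha_i}{1-\alpha_i^x}$, one checks $w'(\gamma)$ is strictly decreasing in $\gamma$, so $w$ changes monotonicity at most once. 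Then \eqref{eq:step-level} failing means $w(2)<w(1)$, forcing $w$ decreasing on $[2,M]\cap\mathbb N$, hence $\gamma_1=1$ is optimal; symmetrically for \eqref{eq:token-level}. Your marginal-ratio heuristic points in this direction but does not supply the needed monotonicity of $w'$.
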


\begin{proof}
\begin{enumerate}[label=\textbf{Step \arabic*:}, left=0pt]
\item \textbf{At least one of \eqref{eq:step-level} and \eqref{eq:token-level} must hold} 
Define, for \(i=1,2\),
\[
D_i(M)
\;=\;
\bigl(1 + \alpha_i^{M/2}\bigr)\,
\frac{\,1 - c_i + \tfrac{c_i M}{2}\,}{\,1 - c_i + c_i M\,}.
\]
Observe that both factors
\[
1 + \alpha_i^{M/2}
\quad\text{and}\quad
\frac{\,1 - c_i + \tfrac{c_i M}{2}\,}{\,1 - c_i + c_i M\,}
\]
are strictly decreasing in \(M\).  Hence each \(D_i(M)\) decreases as \(M\) grows.  In particular,
\[
D_i(2)
=\frac{\,1 + \alpha_i\,}{\,1 + c_i\,}.
\]
Since either \(D_1(2)\ge D_2(2)\) or \(D_1(2)<D_2(2)\), it follows that either
\[
D_1(2) > D_2(M)
\quad\Longrightarrow\quad
\frac{1+\alpha_1}{1+c_1} > D_2(M),
\]
or
\[
D_2(2) > D_1(M)
\quad\Longrightarrow\quad
\frac{1+\alpha_2}{1+c_2} > D_1(M).
\]
In other words, at least one of \eqref{eq:step-level} or \eqref{eq:token-level} must hold.  

\item \textbf{Token‐level‐only speculation is suboptimal if condition \eqref{eq:step-level} holds; otherwise, it is the optimal strategy.} 
Lemma~\ref{lemma:g_unimodality} shows that both \(f_{sync}(\gamma_1)\) and \(g(\gamma_2)\) are unimodal, reaching their maxima at \(\gamma_1^*\ge2\) and \(\gamma_2^*\ge2\). 
Below, we analyze whether token‐level‐only speculation (i.e.\ \(\gamma_1=1\)) yields the best speedup for different ranges of the concurrency budget \(M\).

\begin{description}[leftmargin=1em, style=unboxed]
\item[Case 1: If \(M < \gamma_2^*\)]
In this case,
when \eqref{eq:step-level} holds, we have \(h(1,M) \le h(2,\frac{M}{2})\)(transform through formulas), then according to the monotonicity of \(h\), we have
\[
h(1,\gamma_2) \le h(1,M) \le h(2,\frac{M}{2}), \, \forall 1 \le \gamma_2 \le M
\]
Therefore, the overall maximum is attained only by jointly employing both levels.

However, when \eqref{eq:step-level} fails, since \(D_i(x)\) is strictly decreasing, we know that 
\[
\frac{1 + \alpha_1}{1 + c_1}
<\frac{(1 + \alpha_2^{x/2})(1 - c_2 + \tfrac{c_2 x}{2})}{1 - c_2 + c_2 x} \, \forall x \le M
\]
In this case, according to Lemma~\ref{lemma:M<gamma*},
\[
h(\gamma_1,\gamma_2) \le h(1, \gamma_1  \gamma_2) \le h(1, M), \, \forall \gamma_1,\gamma_2 \ge 1, \gamma_1\gamma_2 \le M
\]
So the overall maximum is attained when we use token-level-only technique.

\item[Case 2: If \(\gamma_2^* \le M < 2 \gamma_2^*\)]

In this case, we first prove that
\[
 h(1,\gamma_2^*) < h(2, \frac{\gamma_2^*}{2}), \, \forall 1 \le \gamma_2 \le M 
\]
Then we prove that \ref{eq:step-level} is always hold.
From above we can see
\[
h(1,\gamma_2) \le h(1,\gamma_2^*) < h(2, \frac{\gamma_2^*}{2}), \, \forall 1 \le \gamma_2 \le M 
\]
And the theorem follows.

1.From Lemma\ref{lemma:g_unimodality}, we can know that 
\[
a_2(\gamma_2^*) =\frac{\alpha_2^{\gamma_2^*}(-\ln\alpha_2^{\gamma_2^*})}{1-\alpha_2^{\gamma^*}} = \frac{c_2 \gamma_2^*}{1 - c_2 + c_2\,\gamma_2^*}
\]
Therefore
\begin{align*}
    h(2, \frac{\gamma_2^*}{2}) /h(1,\gamma_2^*) &=
    \frac{1+\alpha_1}{1+c_1} \frac{1-c_2+c_2\gamma_2^*}{(1+\alpha_2^{\gamma_2^*/2})(1-c_2+c_2\frac{\gamma_2^*}{2})} \\
    &= \frac{1+\alpha_1}{1+c_1} \frac{1}{1+\alpha_2^{\gamma_2^*/2}}\frac{1}{1-\frac{1}{2}a_2(\gamma_2^*)}
\end{align*}
Let \(y = 1-\alpha_2^{\gamma_2^*/2} \in (0,1)\), 
\begin{align*}
(1+\alpha_2^{\gamma_2^*/2})(1-\frac{1}{2}a_2(\gamma_2^*)) &= 2-y + (y-2+\frac{1}{y})\ln{(1-y)}
\end{align*}
Then from Lemma~\ref{get mild constraint}, we can see that its value was less than 1.157. Then given \ref{eq:mild parameter constraint} we can have\[
h(2, \frac{\gamma_2^*}{2}) /h(1,\gamma_2^*) > 1
\]

2.From previous step, we know that \(D_2(M)\) is strictly decreasing, so
\[
D_2(M) \le D_2(\gamma_2^*) < 1.157 < \frac{1+\alpha_1}{1+c_1}
\]
So \ref{eq:step-level} is always hold.

\item[Case 3: If \(M \ge 2 \gamma_2^*\)]
In this case, same as in Case 2, \ref{eq:step-level} is always hold. Besides, we have
\[
h(1, \gamma_2) \le h(1, \gamma_2^*) < h(2, \gamma_2^*), \, \forall 1 \le \gamma_2 \le M
\]
So it's obvious token-level-only speculation is suboptimal.
\end{description}

\item\textbf{If \eqref{eq:token-level} holds, step-level-only speculation is suboptimal. Otherwise, it is the optimal strategy.}
Same as the previous step.
\end{enumerate}

\end{proof}

\begin{theorem}\label{thm:async_hybrid_optimality}
Under asynchronous Lookahead Reasoning with \(0.52 < \alpha_1, \alpha_2 < 0.8\) and \(c_1 < \frac{1}{3}, c_2< \frac{1}{5}\) and the total parallelism budget
\(M \ge 16\) and M is an even number. Then under constraint \( \min\{\lceil \frac{1}{c_1} \rceil,\gamma_1\} \times \gamma_2 \le M \)the overall speedup \(h(\gamma_1,\gamma_2)\) is maximized if and only if both step‐level and token‐level parallelism are employed (i.e.\ \(\gamma_1\ge2\) and \(\gamma_2 \ge 2\)).
\end{theorem}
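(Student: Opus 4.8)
The plan is to mirror the structure of the synchronous case (Theorem~\ref{theorem:sync}), so I would first reduce the asynchronous optimality claim to a pair of pointwise inequalities analogous to \eqref{eq:step-level}--\eqref{eq:token-level}, but adapted to the fact that in the async regime the step-level parallel dimension is $\min\{\lceil 1/c_1\rceil,\gamma_1\}$ rather than $\gamma_1$. Since $c_1 < 1/3$ forces $\lceil 1/c_1\rceil \ge 4$, and since the budget is $M \ge 16$, there is ``room'' to allocate $\gamma_1 = 2$ on the step axis while leaving $\gamma_2 \le M/2$ on the token axis; the crux is to show that this hybrid allocation strictly beats both extremes $\gamma_1 = 1$ (pure token-level) and $\gamma_2 = 1$ (pure step-level). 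I would establish the unimodality of $f_{async}$ and $g$ in their respective arguments (the $g$ part is Lemma~\ref{lemma:g_unimodality}; for $f_{async}$ I would check that $S_2(\gamma_1)$ is increasing up to the saturation point $\lceil 1/c_1\rceil$ and constant $=S_1$ thereafter, hence its maximizer is $\ge 2$ under the stated $\alpha_1,c_1$ ranges), so that restricting attention to the boundary allocations versus the single interior comparison point $(\gamma_1,\gamma_2)=(2,M/2)$ suffices.

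Next I would carry out the two key comparisons. For ruling out pure token-level speculation: I need $h(1,M) \le h(2, \lfloor M/2\rfloor)$ or, after cancelling the common $g$-type factor evaluated at the larger argument and using monotonicity/unimodality of $g$, an inequality of the shape
\[
\frac{f_{async}(2)}{f_{async}(1)} \;\ge\; \frac{g(M)}{g(\lfloor M/2\rfloor)}.
\]
Because $f_{async}(1) = 1$ and $f_{async}(2) = S_2(2) = \tfrac{1-\alpha_1^2}{1-\alpha_1 + c_1[\alpha_1 - \alpha_1^3 - 2(1-\alpha_1)\alpha_1^2]}$, the left side simplifies to $\tfrac{1+\alpha_1}{1 + c_1\alpha_1(1+\alpha_1) - 2c_1\alpha_1^2} = \tfrac{1+\alpha_1}{1+c_1\alpha_1 - c_1\alpha_1^2}$, and with $\alpha_1 > 0.52$, $c_1 < 1/3$ this is bounded below by an explicit constant strictly exceeding $1$. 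On the right side, $g(M)/g(\lfloor M/2\rfloor) \le (1+\alpha_2^{M/2})\cdot\frac{1-c_2+c_2\lfloor M/2\rfloor}{1-c_2+c_2 M}$, which for $M \ge 16$, $\alpha_2 < 0.8$, $c_2 < 1/5$ is at most roughly $(1+\alpha_2^{8})\cdot\frac12 \cdot\frac{1}{1 - c_2 + c_2 M}\cdot(\cdots)$ — in any case it is comfortably below the lower bound on the left, because the $\tfrac12$ from halving the $\gamma_2$-budget dominates. The symmetric comparison rules out pure step-level speculation: there I compare $h(M',1)$ (with $M' = \min\{\lceil1/c_1\rceil,M\}$, since the step axis saturates) against $h(\lceil1/c_1\rceil \text{ or } 2,\,2)$ and use $\tfrac{g(2)}{g(1)} = \tfrac{1+\alpha_2}{1+c_2} \ge \tfrac{1.52}{1.2} > 1.26$ against the corresponding ratio of $f_{async}$ values, which is $\le 1$ once the step axis has saturated (since $f_{async}$ is then flat at $S_1$) or is otherwise controlled by $(1+\alpha_1^{\,\cdot})\cdot(\text{a factor}<1)$; the saturation of the step axis is exactly what makes pushing all budget onto $\gamma_1$ wasteful.

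Finally I would assemble the three cases by budget size exactly as in Theorem~\ref{theorem:sync} (namely $M < \gamma_2^*$, $\gamma_2^* \le M < 2\gamma_2^*$, $M \ge 2\gamma_2^*$, and symmetrically for $\gamma_1^*$), invoking unimodality to reduce each case to the pointwise inequalities just verified, and conclude that the maximizer has both coordinates $\ge 2$; the ``only if'' direction follows since any allocation with $\gamma_1 = 1$ or $\gamma_2 = 1$ is strictly dominated by $(2, \lfloor M/2\rfloor)$ or its transpose. I expect the main obstacle to be the asynchronous-specific bookkeeping: unlike the sync case, the effective step-parallelism is the clipped quantity $\min\{\lceil1/c_1\rceil,\gamma_1\}$, so the budget constraint $\min\{\lceil1/c_1\rceil,\gamma_1\}\cdot\gamma_2 \le M$ is not symmetric in the two axes, and I must carefully track whether increasing $\gamma_1$ past $\lceil1/c_1\rceil$ ``frees up'' budget for $\gamma_2$ (it does not help beyond saturation, which is the point) — getting the floor/ceiling interactions and the boundary $M \ge 16$ to line up with the numeric constants $0.52$ and $1/3,1/5$ is where the constraints in the hypothesis are actually consumed, and verifying the chain of explicit numeric bounds without slack is the tedious but essential part.
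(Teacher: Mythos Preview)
Your proposal is correct and mirrors the paper's proof almost exactly: the paper also argues in two parts (token-level-only is suboptimal via a case split on $M$ versus $\gamma_2^*$; step-level-only is suboptimal via a case split on $M$ versus $\lceil 1/c_1\rceil$, exploiting the saturation of $f_{async}$ from Lemma~\ref{lemma:f_monotonicity}), and the pointwise inequalities $h(2,M/2)>h(1,M)$ and $h(M/2,2)>h(M,1)$ you isolate are exactly the content of the paper's Lemma~\ref{lemma:f=S2}. One caution on your numerics: the token-level comparison does \emph{not} work because ``the $\tfrac12$ from halving the $\gamma_2$-budget dominates''---the factor $\frac{1-c_2+c_2M/2}{1-c_2+c_2M}$ is only $\le 1$ and in fact tends to $1$ as $c_2\to 0$; what actually closes the inequality (and what the paper uses) is that $(1+\alpha_2^{M/2})\le 1+0.8^{8}\approx 1.17$ stays below $f_{async}(2)=\frac{1+\alpha_1}{1+c_1\alpha_1(1-\alpha_1)}\ge \frac{1.52}{1.083}\approx 1.40$, with the $c_2$-factor simply discarded.
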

\begin{proof}
\begin{enumerate}[label=\textbf{Step \arabic*:}, left=0pt]
\item \textbf{Token-level-only is not optimal.}
Lemma~\ref{lemma:g_unimodality} shows that \(g(\gamma_2)\) is unimodal and reaching their maxima at \(\gamma_2^*\ge2\). 
\begin{description}[leftmargin=1em, style=unboxed]
\item[1.\(M \ge 2 \gamma_2^*\)] This case, we have
\[
h(1,\gamma_2) \le h(1,\gamma_2^*) \le h(2,\gamma_2^*), \, \forall 1 \le \gamma_2 \le M
\]
Therefore, it's obvious that token-level-only is not optimal.
\item[2.\( \gamma_2^* \le M < 2 \gamma_2^*\)]
In this case, we only need to prove 
\[
h(1, \gamma_2^*) \le h(2, \frac{\gamma_2^*}{2})
\]
Same as the proof in \ref{theorem:sync}, we only need to prove that \[
\frac{1+\alpha_1}{1+c_1} \ge 1.157 
\]
is hold, and it's easy to prove.

\item[3.\(M < \gamma_2^*\)]
This case, according to Lemma ~\ref{lemma:f=S2}, we can see that 
\[
h(1,\gamma_2) < h(2,\frac{\gamma_2}{2}) 
\]
is always hold, so we can conclude that tokne-level-only is not optimal.

\end{description}
\item \textbf{Step-level-only is not optimal.}
From Lemma~\ref{lemma:f_monotonicity}, we can know that \(f_{async}(\gamma_1)\) is strictly increasing when \(1 \le \gamma \le \lceil \frac{1}{c_1} \rceil\), and will stay constant after that.

\begin{description}[leftmargin=1em, style=unboxed]
\item[1. \(M \le \lceil \frac{1}{c_1} \rceil\)]
This case, according to Lemma~\ref{lemma:f=S2},
\[
h(\gamma_1,1) \le h(M,1)  < h(\frac{M}{2},2) , \, \forall \gamma_1 \le M 
\]
So step-level-only is not optimal.
\item[2.\(\lceil \frac{1}{c_1} \rceil < M  < 2\lceil \frac{1}{c_1} \rceil\)]
This case, according to Lemma~\ref{lemma:f=S2}
\[
h(\gamma_1,1) \le h(\lceil \frac{1}{c_1} \rceil,1) = h(M,1) < h(\frac{M}{2},2), \, \forall \gamma_1 \le M 
\]
\item[3. \(M \ge 2\lceil \frac{1}{c_1} \rceil\)]
This case, we will have 
\[
h(\gamma_1,1) \le h(\lceil \frac{1}{c_1} \rceil,1) < h(\lceil \frac{1}{c_1} \rceil,2), \, \forall 1 \le\gamma_1 \le M
\]
So step-level-only is not optimal.
\end{description}
\end{enumerate}
\end{proof}

\begin{lemma} 
\label{lemma:f=S2}
Let 
\[
w(\gamma)=f(\gamma)\;g\!\bigl(M/\gamma\bigr),
\]
where M is an even number, and
\begin{itemize}
  \item $f(\gamma)=\dfrac{1-\alpha_1^\gamma}{1-\alpha_1 + c_1\,\alpha_1(1-\alpha_1^\gamma)-c_1\,\gamma\,\alpha_1^\gamma(1-\alpha_1)}$,
  \item $g(\gamma)=\dfrac{1-\alpha_2^{\gamma}}{(1-\alpha_2)(1-c_2 + c_2\,\gamma)}$,
  \item $0.5<\alpha_1,\alpha_2<0.8$, $0 < c_1 < \frac{1}{3}, 0 < c_2 < \frac{1}{5}$ and $M\ge 16$.
\end{itemize}
Then \(w(2) > w(1)\), \( w(\frac{M}{2}) > w(M)\).
\end{lemma}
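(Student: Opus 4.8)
The plan is to trivialize both inequalities by first observing that $f(1)=g(1)=1$: substituting $\gamma=1$ into the two given closed forms, the extra terms in each denominator cancel. Hence $w(1)=g(M)$, $w(2)=f(2)\,g(M/2)$, $w(M)=f(M)$, and $w(M/2)=f(M/2)\,g(2)$, so the two claims are equivalent to
\[
\frac{w(2)}{w(1)}=\frac{f(2)}{\,g(M)/g(M/2)\,}>1
\qquad\text{and}\qquad
\frac{w(M/2)}{w(M)}=\frac{g(2)}{\,f(M)/f(M/2)\,}>1 .
\]
I will establish both by proving the uniform bounds $f(2)>1.17$, $g(2)>1.17$, $g(M)/g(M/2)<1.17$, and $f(M)/f(M/2)<1.17$ over the stated parameter ranges.

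For the two ``$\gamma=2$'' quantities, a one-line simplification — cancelling the common factor $1-\alpha_i$ from numerator and denominator — gives $f(2)=\dfrac{1+\alpha_1}{1+c_1\alpha_1(1-\alpha_1)}$ and $g(2)=\dfrac{1+\alpha_2}{1+c_2}$. Using $\alpha_i>\tfrac12$, $\alpha_1(1-\alpha_1)\le\tfrac14$, $c_1<\tfrac13$, and $c_2<\tfrac15$ yields $f(2)>\tfrac{18}{13}>1.38$ and $g(2)>\tfrac54=1.25$, both comfortably above $1.17$.

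For the two ``large-$\gamma$'' ratios I write each speedup as numerator over denominator. The numerators are $1-\alpha_i^\gamma$, so $\dfrac{1-\alpha_i^M}{1-\alpha_i^{M/2}}=1+\alpha_i^{M/2}$ exactly, and since $\alpha_i<0.8$ and $M\ge16$ this is at most $1+0.8^{8}<1.17$. For $g$ the denominator $(1-\alpha_2)(1-c_2+c_2\gamma)$ is increasing in $\gamma$, so $g(M)/g(M/2)<1+\alpha_2^{M/2}<1.17$ immediately. For $f$, the denominator is $D(\gamma)=(1-\alpha_1)+c_1\alpha_1(1-\alpha_1^\gamma)-c_1\gamma\alpha_1^\gamma(1-\alpha_1)$, and the key sub-step is to show $D$ is strictly increasing on $[1,\infty)$: differentiating gives $D'(\gamma)=c_1\alpha_1^\gamma\bigl[\gamma(1-\alpha_1)(-\ln\alpha_1)+\alpha_1(-\ln\alpha_1)-(1-\alpha_1)\bigr]$, the bracket is affine and increasing in $\gamma$, and at $\gamma=1$ it equals $-\ln\alpha_1-(1-\alpha_1)>0$ by the elementary inequality $-\ln x>1-x$ on $(0,1)$. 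This also yields $D(\gamma)>0$ (so $f$ is well defined) and $D(M/2)<D(M)$, hence $f(M)/f(M/2)<1+\alpha_1^{M/2}<1.17$. Combining everything, $w(2)/w(1)>1.38/1.17>1$ and $w(M/2)/w(M)>1.25/1.17>1$.

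The only point requiring care — not really an obstacle — is that $f$ itself is \emph{not} monotone in $\gamma$ (for $\gamma>\lceil1/c_1\rceil$ the $S_2$ formula overshoots its limit $S_1$), so one cannot simply invoke $f(M/2)\le f(M)$; the argument instead isolates the monotone denominator $D$ of the quotient and controls the non-monotone numerator exactly via $\frac{1-\alpha_1^M}{1-\alpha_1^{M/2}}=1+\alpha_1^{M/2}$. Beyond that, everything — the simplifications of $f(2)$ and $g(2)$, the inequality $-\ln x>1-x$, and the numerics $\tfrac{18}{13}>\tfrac54>1+0.8^8$ — is routine arithmetic.
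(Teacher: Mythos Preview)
Your proof is correct and follows the same overall strategy as the paper: both argue by computing $w(2)/w(1)$ and $w(M/2)/w(M)$ explicitly, using the closed forms $f(2)=\dfrac{1+\alpha_1}{1+c_1\alpha_1(1-\alpha_1)}$, $g(2)=\dfrac{1+\alpha_2}{1+c_2}$ together with the exact factorization $\dfrac{1-\alpha_i^{M}}{1-\alpha_i^{M/2}}=1+\alpha_i^{M/2}\le1+0.8^{8}$. The one substantive difference is in how $f(M)/f(M/2)$ is controlled: the paper expands $D(M)$ against $D(M/2)$ algebraically and then case-splits on the sign of $1-\tfrac{c_1M}{2}(1-\alpha_1^{M/2})$, whereas you obtain $D(M/2)<D(M)$ in a single stroke by proving $D'(\gamma)>0$ on $[1,\infty)$ via the elementary bound $-\ln\alpha_1>1-\alpha_1$. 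Both routes land on the identical final numerical comparison $\dfrac{1+\alpha_2}{(1+c_2)(1+\alpha_1^{M/2})}>1$; yours is cleaner and avoids the case split, while the paper's stays purely algebraic.
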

\begin{proof}
\begin{align*}
w(2) / w(1) &= \frac{(1+\alpha_1)}{(1+\alpha_2^{M/2})(1+\alpha_1c_1(1-\alpha_1))} \frac{1-c_2+c_2M}{1-c_2+c_2M/2} \\
&> \frac{(1+\alpha_1)}{(1+\alpha_2^{M/2})(1+\alpha_1c_1(1-\alpha_1))} \times 1
\end{align*}
We can easily found that this function is monotonically decreasing with respect to \(c_1\), \(\alpha_2\) \(M\),monotonically increasing with respect to \( \alpha_1 \). Therefore,
\[
w(2)/w(1) > \frac{1+0.52}{(1+0.8^8)(1+0.52 \times 1/3 \times 0.48)} >1
\]
\begin{align*}
w(\frac{M}{2})/w(M) &=\frac{1+\alpha_2}{1+c_2} \frac{1-\alpha_1 + c_1\alpha_1(1-\alpha_1^M)-c_1M\alpha_1^M(1-\alpha_1)}{(1+\alpha_1^{M/2})(1-\alpha_1+c_1\alpha_1(1-\alpha_1^{M/2})-c_1\frac{M}{2}\alpha_1^{M/2}(1-\alpha_1)}\\
&= \frac{1+\alpha_2}{1+c_2} [1-(1-c_1M\frac{1-\alpha_1^{M/2}}{2})\frac{\alpha_1^{M/2}}{1+\alpha_1^{M/2}}\frac{1-\alpha_1}{(1-\alpha_1)(1+c_1(\alpha_1+\cdots+\alpha_1^{M/2})-c_1\frac{M}{2}\alpha_1^{M/2})}]\\
\end{align*}
Given that \( \alpha_1^k \ge \alpha_1^{M/2},k\in\{1,2,\cdots,M/2\}\), so we have \[c_1(\alpha_1+\alpha_1^2+\cdots+\alpha_1^{M/2})-c_1\frac{M}{2}\alpha_1^{M/2} > 0\]

If \[1-c_1M\frac{1-\alpha_1^{M/2}}{2} \le 0\] then 
\[
w(\frac{M}{2})/w(M) \ge \frac{1+\alpha_2}{1+c_2} > 1
\]
Otherwise,
\begin{align*}
    w(\frac{M}{2})/w(M)&> \frac{1+\alpha_2}{1+c_2}[1-1\times\frac{\alpha_1^{M/2}}{1+\alpha_1^{M/2}} \times 1] > \frac{1+0.52}{1+1/5}(1-\frac{0.8^8}{1+0.8^8}) >1
\end{align*}

\end{proof}

\begin{lemma}
\label{lemma:f_monotonicity}
The speedup function of the async Lookahead Reasoning
\[f_{async}(\gamma) = 
\begin{cases}
S_2 =\dfrac{1 - \alpha_1^\gamma}
     {1 - \alpha_1 + c_1\,\alpha_1\,(1 - \alpha_1^\gamma) - c_1\,\gamma\,\alpha_1^\gamma\,(1 - \alpha_1)}, & \gamma < \lceil \frac{1}{c_1} \rceil\\
S_1=\dfrac{1}{c_1 + (1 - c_1)(1 - \alpha_1)}, & \text{otherwise}
\end{cases}\]

is strictly increasing when \(1 \le \gamma\le\lceil \frac{1}{c_1} \rceil,\gamma \in \mathbb{N}^+\),and will stay constant after \(\gamma\ge \lceil \frac{1}{c_1} \rceil,\gamma \in \mathbb{N}^+\)
\end{lemma}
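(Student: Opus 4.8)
The plan is to collapse the whole statement onto the monotonicity of a single scalar function. First I would rewrite the $S_2$ branch in reciprocal form: grouping the denominator of $S_2$ as $(1-\alpha_1)(1-c_1\gamma\alpha_1^{\gamma}) + c_1\alpha_1(1-\alpha_1^{\gamma})$ and dividing by $1-\alpha_1^{\gamma}$ gives
\[
\frac{1}{S_2(\gamma)} \;=\; c_1\alpha_1 \;+\; (1-\alpha_1)\,\phi(\gamma),
\qquad
\phi(\gamma)\;\defeq\;\frac{1-c_1\gamma\,\alpha_1^{\gamma}}{1-\alpha_1^{\gamma}},
\]
so that $S_2$ is strictly increasing precisely on the set where $\phi$ is strictly decreasing. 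Treating $\gamma$ as a real variable and writing $v=v(\gamma)=\alpha_1^{\gamma}$, so $v'=(\ln\alpha_1)\,v<0$, a short differentiation shows that the numerator of $\phi'(\gamma)$ (the denominator $(1-v)^2$ being positive) equals $-c_1 v(1-v) + (1-c_1\gamma)\,v'$, after the mixed $c_1\gamma v v'$ terms cancel. On $[1,1/c_1)$ both summands are negative — the first because $v\in(0,1)$, the second because $1-c_1\gamma>0$ while $v'<0$ — hence $\phi'<0$ and $S_2$ is strictly increasing there. Along the way one checks $1-\alpha_1^{\gamma}>0$ and $\phi(\gamma)\ge 1>0$ on this interval, so $S_2$ has no pole.

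Next I would pin down how the two branches meet. Substituting $\gamma=1/c_1$ into $S_2$ and abbreviating $w=\alpha_1^{1/c_1}$, the denominator factors as $(1-w)\bigl(1-\alpha_1+c_1\alpha_1\bigr)$, which cancels the numerator $1-w$; hence
\[
S_2\!\bigl(1/c_1\bigr)=\frac{1}{1-\alpha_1+c_1\alpha_1}=\frac{1}{c_1+(1-c_1)(1-\alpha_1)}=S_1 .
\]
Thus $S_2$ extends continuously to $\gamma=1/c_1$ with value $S_1$, and combined with the monotonicity above, $S_2(\gamma)<S_1$ for every real $\gamma\in[1,1/c_1)$.

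Finally I would assemble the discrete claim. Since $0<c_1<1$ we have $1/c_1>1$, so $\lceil 1/c_1\rceil\ge 2$; moreover $\lceil 1/c_1\rceil-1<1/c_1$, so the integers $1,2,\dots,\lceil 1/c_1\rceil-1$ all lie in $[1,1/c_1)$. On these, $f_{async}=S_2$, which is strictly increasing by the first paragraph, and $f_{async}(\lceil1/c_1\rceil)=S_1$ by definition, so
\[
f_{async}(1)=S_2(1)<\cdots<S_2\!\bigl(\lceil 1/c_1\rceil-1\bigr)\;<\;S_1=f_{async}\!\bigl(\lceil 1/c_1\rceil\bigr),
\]
where the last inequality is exactly the final sentence of the previous paragraph applied at $\gamma=\lceil1/c_1\rceil-1$. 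This establishes strict increase on $1\le\gamma\le\lceil 1/c_1\rceil$. For $\gamma\ge\lceil 1/c_1\rceil$ the value is the constant $S_1$ by definition, which is the remaining assertion. The only genuine computations are the sign analysis of $\phi'$ — together with the cancellation that leaves only $-c_1v(1-v)+(1-c_1\gamma)v'$ — and the factorization giving $S_2(1/c_1)=S_1$; I expect the former, i.e. verifying that the mixed terms really cancel, to be the single spot that needs care, everything else being bookkeeping.
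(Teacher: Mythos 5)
Your proof is correct and follows essentially the same strategy as the paper's (differentiate, show the sign of the numerator, then verify that the $S_2$ branch meets $S_1$ at $\gamma = 1/c_1$), but you organize the derivative computation more cleanly by first rewriting $1/S_2 = c_1\alpha_1 + (1-\alpha_1)\,\phi(\gamma)$ with $\phi(\gamma) = \frac{1-c_1\gamma\alpha_1^\gamma}{1-\alpha_1^\gamma}$, so that monotonicity of $S_2$ reduces to monotonicity of the simpler $\phi$. The paper instead applies the quotient rule directly to $f_{async} = A/D$ and simplifies the resulting numerator $A'D - AD'$ to $\alpha_1^\gamma(1-\alpha_1)\bigl[(-\ln\alpha_1)(1-c_1\gamma) + c_1(1-\alpha_1^\gamma)\bigr]$. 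Your decomposition isolates the $\gamma$-dependent part into one rational function of $v = \alpha_1^\gamma$, so the cancellation of cross terms is more transparent; the end result is the same sign condition, driven in both arguments by $1 - c_1\gamma > 0$. One small point in your favor: you explicitly observe that the relevant continuous interval for the sign argument is $[1, 1/c_1)$ and that the integers $1,\dots,\lceil 1/c_1\rceil-1$ all sit inside it, which the paper glosses over by working on the wider $[1,\lceil 1/c_1\rceil)$ where $1 - c_1\gamma > 0$ need not hold for non-integer $\gamma$. The endpoint check $S_2(1/c_1)=S_1$ via the common factor $(1-w)$ matches the paper's computation.
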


\begin{proof}
\textbf{We first see the function as a continuous function on \(\mathbb{R}\) and prove that when \(1 \le \gamma < \lceil \frac{1}{c_1} \rceil\), the speedup function is strictly increasing}

Write
\[
A(\gamma)=1-\alpha_1^\gamma,\qquad
D(\gamma)=1-\alpha_1
       +c_1\,\alpha_1\,(1-\alpha_1^\gamma)
       -c_1\,\gamma\,\alpha_1^\gamma\,(1-\alpha_1).
\]
Then \(f_{async}(\gamma)=A(\gamma)/D(\gamma)\), and by the quotient rule
\[
f_{async}'(\gamma)
=\frac{A'(\gamma)\,D(\gamma)\;-\;A(\gamma)\,D'(\gamma)}{D(\gamma)^2}.
\]
We compute
\[
A'(\gamma)=-\alpha_1^\gamma\ln\alpha_1,
\qquad
D'(\gamma)
=-c_1\,\alpha_1^\gamma
 \Bigl[\,
   \alpha_1\ln\alpha_1
   +(1-\alpha_1)\bigl(1+\gamma\ln\alpha_1\bigr)
 \Bigr].
\]
Hence the numerator of \(f_{async}'(\gamma)\) becomes
\[
\begin{aligned}
A'(\gamma)\,D(\gamma)\;-\;A(\gamma)\,D'(\gamma)
&=\;\alpha_1^\gamma
    \Bigl\{
      (-\ln\alpha_1)\,D(\gamma)
      +c_1\,(1-\alpha_1^\gamma)\Bigl[
         \alpha_1\ln\alpha
         +(1-\alpha_1)(1+\gamma\ln\alpha_1)
      \Bigr]
    \Bigr\}\\
&=\;\alpha_1^\gamma(1-\alpha_1)\,
    \Bigl[
      (-\ln\alpha_1)
      +c_1\,\gamma\,\alpha_1^\gamma\,(\ln\alpha_1)
      +c_1\,(1-\alpha_1^\gamma)\,(1+\gamma\ln\alpha_1)
    \Bigr] \\
&=\; \alpha_1^{\gamma}(1-\alpha_1)\Bigl[(-\ln{\alpha_1})(1-c_1\gamma) + c_1(1- \alpha_1^{\gamma})\Bigr]
\end{aligned}
\]
Since
\[
-\ln\alpha_1>0,\quad
1-\alpha_1>0,\quad
1-c_1\gamma > 0, \quad 
1-\alpha_1^{\gamma} >0
\]
each term is strictly positive for all \(\gamma\ge1\).  Therefore
\[
A'(\gamma)\,D(\gamma)-A(\gamma)\,D'(\gamma)
>0
\quad\Longrightarrow\quad
f_{async}'(\gamma)>0,
\]

\textbf{Then we prove that \(S_1 \ge S_2\)}
We only need to prove that when \(\gamma = \frac{1}{c_1}\), we have \(S_2 = S_1\)

When \(\gamma = \frac{1}{c_1}\),
\begin{align*}
S_2 &= \dfrac{1 - \alpha_1^\gamma}
     {1 - \alpha_1 + c_1\,\alpha_1\,(1 - \alpha_1^\gamma) - c_1\,\gamma\,\alpha_1^\gamma\,(1 - \alpha_1)}  \\
     &= \dfrac{1 - \alpha_1^\gamma}
     {1 - \alpha_1 + c_1\,\alpha_1\,(1 - \alpha_1^\gamma) - \,\alpha_1^\gamma\,(1 - \alpha_1)} \\
     &=\dfrac{1 - \alpha_1^\gamma}
     {(1 - \alpha_1)(1 - \alpha_1^\gamma) + c_1\,\alpha_1\,(1 - \alpha_1^\gamma)} = \dfrac{1}{c_1 + (1 - c_1)(1 - \alpha_1)} = S_1
\end{align*}
Therefore, 
So the lemma follows.
\end{proof}

\begin{lemma}
\label{lemma:g_unimodality}
The speedup function of token-level speculative decoding\cite{leviathan2023fast} and  is a unimodality function.
\[
g(\gamma)
= \frac{1-\alpha_2^{\gamma}}{(1-\alpha_2)\,(1 - c_2+c_2\,\gamma)}, \gamma \in \mathbb{R}
\]
where \(\alpha_2 > c_2\)
Then $g$ increases on $[1,\hat{\gamma})$ and decreases on $(\hat{\gamma},\infty)$ for a unique $\hat{\gamma} \in \mathbb{R}^+$. In practical scenarios, where \(\gamma \in \mathbb{N}^+\), the maximum of \(g(\gamma)\)  is attained at some integer point \(\gamma^* \in \mathbb{N}^+, \gamma^* \ge 2\)
Sync Lookahead Reasoning has a similar form, so it also has this property.
\end{lemma}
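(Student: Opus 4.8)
The plan is to reduce the claim to a one-variable calculus argument on $[1,\infty)$, the interval where the denominator $1-c_2+c_2\gamma$ is positive. Write $g(\gamma)=\dfrac{A(\gamma)}{(1-\alpha_2)\,B(\gamma)}$ with $A(\gamma)=1-\alpha_2^{\gamma}$ and $B(\gamma)=1-c_2+c_2\gamma$. By the quotient rule, $g'(\gamma)$ has the same sign on $[1,\infty)$ as
\[
N(\gamma):=A'(\gamma)B(\gamma)-A(\gamma)B'(\gamma)=\alpha_2^{\gamma}(-\ln\alpha_2)(1-c_2+c_2\gamma)-c_2(1-\alpha_2^{\gamma}),
\]
so it suffices to show $N$ is positive on an initial segment $[1,\hat\gamma)$ and negative on $(\hat\gamma,\infty)$ for a unique $\hat\gamma>1$.

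The one non-routine step is to differentiate $N$ a second time: using $A'(\gamma)=-\alpha_2^{\gamma}\ln\alpha_2$ and the product rule, a cancellation occurs and one is left with
\[
N'(\gamma)=-\alpha_2^{\gamma}(\ln\alpha_2)^2(1-c_2+c_2\gamma)<0\qquad\text{for all }\gamma\ge 0,
\]
since $c_2<1$ keeps $1-c_2+c_2\gamma>0$. Hence $N$ is strictly decreasing on $[1,\infty)$ and has at most one zero. To pin down its sign I would evaluate the two ends: as $\gamma\to\infty$, $\alpha_2^{\gamma}$ times any polynomial tends to $0$, so $N(\gamma)\to -c_2<0$; and at $\gamma=1$, the elementary bound $-\ln x>1-x$ on $(0,1)$ together with the hypothesis $\alpha_2>c_2$ gives
\[
N(1)=\alpha_2(-\ln\alpha_2)-c_2(1-\alpha_2)>\alpha_2(1-\alpha_2)-c_2(1-\alpha_2)=(1-\alpha_2)(\alpha_2-c_2)>0.
\]
By monotonicity and the intermediate value theorem, $N$ has a unique zero $\hat\gamma\in(1,\infty)$, positive to its left and negative to its right; transferring this to $g'$ yields exactly the stated unimodality of $g$ on $[1,\infty)$.

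For the integer claim, observe that $g\to 0$ as $\gamma\to\infty$ while $g$ is continuous and positive, so a maximizer $\gamma^{*}$ over $\mathbb{N}^{+}$ exists, and by unimodality it is one of the two integers straddling $\hat\gamma$. A direct evaluation gives $g(1)=1$ and $g(2)=\dfrac{1+\alpha_2}{1+c_2}$, and $\alpha_2>c_2$ forces $g(2)>g(1)$, hence $\gamma^{*}\ge 2$. Finally, the synchronous speedup $f_{sync}$ of Theorem~\ref{thm:sync speedup} has the identical functional form with $(\alpha_1,c_1)$ in place of $(\alpha_2,c_2)$, so---provided $\alpha_1>c_1$---the same computation of $N$ and $N'$ applies verbatim, giving unimodality and $\gamma^{*}\ge 2$ for synchronous \algo as well. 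The main obstacle is simply spotting the cancellation that collapses $N'$ to a single negative term; after that everything is routine, with the only point of care being that the whole argument is confined to $\gamma\ge 1$, where $B(\gamma)>0$ and these sign reductions are legitimate.
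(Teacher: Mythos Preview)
Your proposal is correct and follows essentially the same approach as the paper: both reduce to showing the numerator $N(\gamma)=\alpha_2^{\gamma}(-\ln\alpha_2)(1-c_2+c_2\gamma)-c_2(1-\alpha_2^{\gamma})$ of $g'$ is strictly decreasing (via the same cancellation yielding $N'(\gamma)=-\alpha_2^{\gamma}(\ln\alpha_2)^2(1-c_2+c_2\gamma)<0$), then apply the same endpoint sign check $N(1)>0$, $N(\infty)=-c_2<0$ and the same $g(2)/g(1)=(1+\alpha_2)/(1+c_2)>1$ comparison for the integer claim.
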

\begin{proof}

\noindent\textbf{Step 1. Compute \(g'(\gamma)\).}
Set
\[
N(\gamma)=1-\alpha_2^\gamma,
\qquad
D(\gamma)=(1-\alpha_2)\bigl(1 - c_2 + c_2\,\gamma\bigr),
\]
so that \(g(\gamma)=N(\gamma)/D(\gamma)\).  By the quotient rule,
\[
g'(\gamma)
=\frac{N'(\gamma)\,D(\gamma)-N(\gamma)\,D'(\gamma)}{D(\gamma)^2}.
\]
Since
\[
N'(\gamma)=-\alpha_2^\gamma\ln\alpha_2,
\qquad
D'(\gamma)=(1-\alpha_2)\,c_2,
\]
\[
g'(\gamma)
=\frac{\alpha_2^\gamma\,(-\ln\alpha_2)\,(1-\alpha_2)\,(1 - c_2 + c_2\,\gamma)
\;-\;c_2\,(1-\alpha_2^\gamma)\,(1-\alpha_2)}
{[(1-\alpha_2)\,(1 - c_2 + c_2\,\gamma)]^2}.
\]
Since \((1-\alpha_2)>0\), the sign of \(g'(\gamma)\) equals the sign of
\[
F(\gamma)
:=\alpha_2^\gamma\,(-\ln\alpha_2)\,(1 - c_2 + c_2\,\gamma)\;-\;c_2\,(1-\alpha_2^\gamma).
\]

\medskip
\noindent\textbf{Step 2. \(F\) is strictly decreasing.}
Differentiate \(F\):
\[
F'(\gamma) =-\alpha_2^\gamma\,(\ln\alpha_2)^2\,(1 - c_2 + c_2\,\gamma).
\]
Since \(1-c_2+c_2\gamma>0\), it follows
\[
F'(\gamma)<0
\quad\text{for all }\gamma\ge1.
\]
Thus \(F\) is strictly decreasing on \([1,\infty)\).

\medskip
\noindent\textbf{Step 3. Signs of \(F\) at the ends.}
\begin{itemize}
  \item At \(\gamma=1\):
  \[
  F(1)=\alpha_2(-\ln{\alpha_2}) - c_2(1-\alpha_2)
  \]
  since \(\alpha_2 > c_2 > 0, -\ln{\alpha_2} > 1-\alpha_2 > 0\), so we have \(F(1)>0\)
  \item As \(\gamma\to\infty\), \(\alpha_2^\gamma\to0\), so
  \[
  F(\gamma)=\alpha_2^\gamma\bigl[(-\ln\alpha)\,(1 - c_2 + c_2\,\gamma)+c\bigr]-c_2\,(1-\alpha_2^\gamma)
  \;\longrightarrow\;-c<0.
  \]
\end{itemize}

\medskip
\noindent\textbf{Step 4. Conclusion via the Intermediate Value Theorem.}
Because \(F\) is continuous, strictly decreasing, \(F(1)>0\), and
\(\lim_{\gamma\to\infty}F(\gamma)<0\), there exists a \emph{unique}
\(\hat{\gamma} \in \mathbb{R},\hat{\gamma} \ge 1\) such that \(F(\hat{\gamma})=0\).  Moreover,
\[
F(\gamma)>0\quad\Leftrightarrow\quad 1\le\gamma<\hat{\gamma},
\qquad
F(\gamma)<0\quad\Leftrightarrow\quad \gamma>\hat{\gamma}.
\]
Since \(g'(\gamma) \) and \(F(\gamma))\) share the same sign, it follows that
\[
g'(\gamma)>0\text{ for }1\le\gamma<\hat{\gamma},
\qquad
g'(\gamma)<0\text{ for }\gamma>\hat{\gamma},
\]
Besides, noting that
\[
g(2)/g(1) = \frac{1+\alpha_2}{1+c_2}
\]
so in practical scenarios where \(\gamma \in \mathbb{N}^+\),we conclude that the maximum is achieved at some integer point 
 \(\gamma^* \ge 2\)
as claimed.
\end{proof}

\begin{lemma}\label{lemma:ax_monotone}
Let $0.5<\alpha<0.8$ and define
\[
a(\alpha, x) \;=\; -\ln\alpha\;\frac{x\,\alpha^x}{1-\alpha^x},
\qquad x\ge1.
\]
Then:
\begin{enumerate}
  \item For each fixed $\alpha\in(0.5,0.8)$, the function $x\mapsto a(\alpha,x)$ is strictly decreasing on $[1,\infty)$.
  \item For each fixed $x\ge1$, the function $\alpha\mapsto a(\alpha,x)$ is strictly increasing on $(0.5,0.8)$.
  \item Consequently, for every $x\ge10$ and $\alpha\in(0.5,0.8)$,
  \[
    a(\alpha,x)\;<\;a(0.8,10)
    \;=\;-\ln(0.8)\;\frac{10\cdot0.8^{10}}{1-0.8^{10}}
    \;\approx\;0.26,
  \]
  and for all $\alpha\in(0.52,0.8)$,
  \[
    a(\alpha,2)\;\in\;\bigl(a(0.52,2),\,a(0.8,2)\bigr)
    \;\approx\;(0.48,\;0.79).
  \]
\end{enumerate}
\end{lemma}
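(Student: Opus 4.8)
The plan is to collapse all three parts into one elementary fact: the function $\phi(u)=u/(e^{u}-1)$ is strictly decreasing on $(0,\infty)$. The bridge is the substitution $u=-x\ln\alpha$: since $0.5<\alpha<0.8$ and $x\ge1$ we have $u>0$ and $\alpha^{x}=e^{-u}$, so
\[
a(\alpha,x)=(-\ln\alpha)\,\frac{x\,e^{-u}}{1-e^{-u}}=\frac{u}{e^{u}-1}=\phi(u),\qquad u=-x\ln\alpha>0 .
\]
Hence $a(\alpha,x)=\phi(-x\ln\alpha)$, and the lemma reduces to the monotonicity of $\phi$ composed with monotonic inner functions of $x$ and of $\alpha$.

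First I would prove $\phi'<0$ on $(0,\infty)$. By the quotient rule the sign of $\phi'(u)$ equals that of $N(u):=(e^{u}-1)-u\,e^{u}$. Since $N(0)=0$ and $N'(u)=-u\,e^{u}<0$ for $u>0$, $N$ is strictly decreasing, hence $N(u)<0$ for all $u>0$, and therefore $\phi'(u)<0$ there. (It is also worth recording $\phi(0^{+})=1$, which is the supremum of $\phi$ on $(0,\infty)$, though this is not strictly needed for the lemma.)

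Part~1 is then immediate: for fixed $\alpha\in(0.5,0.8)$ the map $x\mapsto -x\ln\alpha$ is strictly increasing on $[1,\infty)$ because $-\ln\alpha>0$, and composing with the strictly decreasing $\phi$ makes $x\mapsto a(\alpha,x)$ strictly decreasing. Part~2 is the mirror image: for fixed $x\ge1$ the map $\alpha\mapsto -x\ln\alpha$ has derivative $-x/\alpha<0$ and is therefore strictly decreasing on $(0.5,0.8)$, so composing with the strictly decreasing $\phi$ makes $\alpha\mapsto a(\alpha,x)$ strictly increasing. For Part~3 I would just chain these: when $x\ge10$, Part~1 gives $a(\alpha,x)\le a(\alpha,10)$ and Part~2 gives $a(\alpha,10)<a(0.8,10)$ (strict since $\alpha<0.8$), so $a(\alpha,x)<a(0.8,10)$, and substituting $\ln 0.8$ and $0.8^{10}$ gives $a(0.8,10)\approx0.26$; similarly, for $x=2$ and $\alpha\in(0.52,0.8)$, Part~2 sandwiches $a(\alpha,2)$ strictly between $a(0.52,2)$ and $a(0.8,2)$, which a direct evaluation identifies as $\approx0.48$ and $\approx0.79$.

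There is essentially no obstacle beyond careful bookkeeping. The one subtle point is the $\phi'<0$ argument: one should differentiate the numerator $N$ of $\phi'$ rather than $\phi'$ itself, since arguing directly that $e^{u}-1-u e^{u}<0$ still needs the auxiliary monotonicity step from $N(0)=0$. The other is keeping straight which inequalities in Part~3 are strict — strictness comes from the open constraints $\alpha<0.8$ (and $\alpha>0.52$), not from the $x$-monotonicity, which is only non-strict at the boundary value $x=10$. The closing numerical estimates are routine scalar computations.
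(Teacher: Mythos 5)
Your proof is correct, and it takes a genuinely different and cleaner route than the paper. The paper proves parts (i) and (ii) separately: for (i) it applies the quotient rule directly to $f(x)=x\alpha^x/(1-\alpha^x)$ and uses the tangent-line bound $e^u>1+u$; for (ii) it writes $a=U(\alpha)V(\alpha)$, applies the product rule, and reduces to the inequality $x(-\ln\alpha)>1-\alpha^x$, which it verifies via $-\ln\alpha>1-\alpha$ and $\alpha^x\le\alpha$. Your observation that $a(\alpha,x)$ depends on $(\alpha,x)$ only through the single variable $u=-x\ln\alpha>0$, via $a(\alpha,x)=\phi(u)$ with $\phi(u)=u/(e^u-1)$, collapses both monotonicity claims into the one fact $\phi'<0$ on $(0,\infty)$ — which you establish cleanly by showing the numerator $N(u)=(e^u-1)-ue^u$ satisfies $N(0)=0$, $N'(u)=-ue^u<0$. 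Parts (i) and (ii) then fall out as compositions with the monotone inner maps $x\mapsto -x\ln\alpha$ and $\alpha\mapsto -x\ln\alpha$, and part (iii) chains them exactly as the paper does. Your reduction is the more economical argument: it explains \emph{why} the two monotonicities go in opposite directions (the inner map is increasing in $x$ but decreasing in $\alpha$), whereas the paper's two separate computations leave that symmetry implicit. Your accounting of where strictness comes from in part (iii) (the open bound $\alpha<0.8$, not the $x\ge10$ endpoint) is also exactly right.
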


\begin{proof}
\textbf{(i) Monotonicity in $x$.}
Fix $\alpha\in(0,1)$ and write
\[
f(x)=\frac{x\,\alpha^x}{1-\alpha^x}
=\frac{N(x)}{D(x)},
\quad
N(x)=x\,\alpha^x,
\quad
D(x)=1-\alpha^x.
\]
Then
\[
N'(x)
=\alpha^x\bigl(1+x\ln\alpha\bigr),
\qquad
D'(x)
=-\alpha^x\ln\alpha,
\]
and by the quotient rule
\[
f'(x)
=\frac{N'(x)\,D(x)-N(x)\,D'(x)}{D(x)^2}
=\frac{\alpha^x\bigl[(1+x\ln\alpha)(1-\alpha^x)-x(-\ln\alpha)\,\alpha^x\bigr]}
     {(1-\alpha^x)^2}.
\]
Since $0<\alpha<1$, setting $u=x\ln\alpha<0$ we have by convexity of the exponential,
\[
\alpha^x = e^{u} > 1 + u = 1 + x\ln\alpha,
\]
hence
\[
(1+x\ln\alpha)(1-\alpha^x)-x(-\ln\alpha)\,\alpha^x
=1 + x\ln\alpha - \alpha^x < 0.
\]
Thus $f'(x)<0$.  Because $-\ln\alpha>0$, it follows immediately
\[
\frac{\partial}{\partial x}\,a(\alpha,x)
= -\ln\alpha\;\,f'(x)
<0,
\]
so $a(\alpha,x)$ is strictly decreasing in $x\ge1$.

\medskip
\noindent\textbf{(ii) Monotonicity in $\alpha$.}
Fix $x\ge1$ and set
\[
U(\alpha) = -\ln\alpha,
\qquad
V(\alpha) = \frac{x\,\alpha^x}{1-\alpha^x},
\]
so $a(\alpha,x)=U(\alpha)\,V(\alpha)$.  Then
\[
U'(\alpha) = -\frac1\alpha,
\quad
V'(\alpha)
=\frac{\;x^2\,\alpha^{\,x-1}}{(1-\alpha^x)^2}
>0.
\]
Hence
\[
\frac{\partial a}{\partial\alpha}
=U'(\alpha)\,V(\alpha) + U(\alpha)\,V'(\alpha)
=-\frac{V(\alpha)}{\alpha} +(-\ln\alpha)\,V'(\alpha).
\]
We claim this is $>0$.  Indeed,
\[
-\frac{V}{\alpha} +(-\ln\alpha)\,V'
>0
\quad\Longleftrightarrow\quad
(-\ln\alpha)\,\alpha\,V' > V.
\]
Since
\[
\alpha\,V'
= \alpha\;\frac{x^2\alpha^{x-1}}{(1-\alpha^x)^2}
= \frac{x^2\,\alpha^x}{(1-\alpha^x)^2},
\quad
V = \frac{x\,\alpha^x}{1-\alpha^x},
\]
this inequality becomes
\[
(-\ln\alpha)\,\frac{x^2\,\alpha^x}{(1-\alpha^x)^2}
> \frac{x\,\alpha^x}{1-\alpha^x}
\quad\Longleftrightarrow\quad
x\,(-\ln\alpha) > 1-\alpha^x.
\]
But for $0.5<\alpha<0.8$, the well‐known bound $-\ln\alpha>1-\alpha$ and $\alpha^x\le\alpha$ imply
\[
x\,(-\ln\alpha) \;\ge\; -\ln\alpha \;>\; 1-\alpha \;\ge\; 1-\alpha^x,
\]
so $\partial a/\partial\alpha>0$.  Thus $a(\alpha,x)$ is strictly increasing in $\alpha\in(0.5,0.8)$.

\medskip
\noindent\textbf{(iii) Numerical bounds.}
By (i), $a(\alpha,x)\le a(\alpha,10)$ for all $x\ge10$, and by (ii),
\[
a(\alpha,10)\le a(0.8,10)
=-\ln(0.8)\;\frac{10\cdot0.8^{10}}{1-0.8^{10}}
\approx0.26.
\]
\[
a(\alpha,6)\le a(0.8,6)
=-\ln(0.8)\;\frac{10\cdot0.8^{6}}{1-0.8^{6}}
\approx0.48.
\]

\[
a(\alpha,8)\le a(0.8,8)
=-\ln(0.8)\;\frac{10\cdot0.8^8}{1-0.8^{8}}
\approx0.36.
\]

Also by (ii), for any $\alpha\in(0.52,0.8)$,
\[
a(\alpha,2)\;\in\;\bigl(a(0.52,2),\,a(0.8,2)\bigr)
\in (0.48,0.8).
\]
This completes the proof.
\end{proof}
\begin{lemma}
\label{lemma:M<gamma*}
Let 
\[
w(\gamma)=f_{sync}(\gamma)\;g\!\bigl(M/\gamma\bigr),
\]
here \(\gamma \in \mathbb{N}^+, M/\gamma \in  \mathbb{N}^+\) and assume \(M\ge4\).  Then:
\begin{enumerate}[label=(\alph*)]
  \item If both \eqref{eq:step-level} and \eqref{eq:token-level} hold, then \(w(\gamma)\) is unimodal on \([1,M]\) and attains its maximum at some \(\gamma^*\in[2,\tfrac M2]\).
  \item If \eqref{eq:step-level} fails, then  the unique maximizer is \(\gamma=1\) (token-level only).i.e. \[
  h(\gamma_1,\gamma_2) < h(1,M), \forall\gamma_1\gamma_2 = M, \gamma_1,\gamma_2 \in \mathbb{N}^+
  \]
  \item If \eqref{eq:token-level} fails, then  unique maximizer is \(\gamma=M\) (step-level only). i.e. \[
  h(\gamma_1,\gamma_2) < h(M,1), \forall \gamma_1\gamma_2 = M, \gamma_1,\gamma_2 \in \mathbb{N}^+
  \]   
\end{enumerate}
\end{lemma}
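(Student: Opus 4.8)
The plan is to reduce all three parts to two ingredients: (I) that $w$ is \emph{unconditionally} unimodal on $[1,M]$ (increasing up to a unique peak, then decreasing), and (II) two explicit identities matching the ratios $w(2)/w(1)$ and $w(M/2)/w(M)$ against conditions~\eqref{eq:step-level} and~\eqref{eq:token-level}. Granting these, part~(b) is immediate: if~\eqref{eq:step-level} fails then (II) gives $w(2)<w(1)$, which by unimodality forces the peak of $w$ into $[1,2)$, so $w$ is strictly decreasing on $[2,M]$; hence $w(\gamma)\le w(2)<w(1)$ for every divisor $\gamma\ge 2$ of $M$, i.e.\ $\gamma=1$ is the unique maximizer among factorizations. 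Part~(c) is the mirror image via the $w(M/2)/w(M)$ identity. For part~(a), when both conditions hold we get $w(1)<w(2)$ and $w(M)<w(M/2)$, so among divisors of $M$ the maximizer is neither $1$ nor $M$, hence lies in $[2,M/2]$; unimodality gives the claimed shape on $[1,M]$, and the continuous peak is pinned to $[2,M/2]$ using additionally that $f_{sync}$ and $g$ each attain their maxima at integers $\ge 2$ (Lemma~\ref{lemma:g_unimodality}).

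For ingredient~(I) I would pass to the logarithmic variable $t=\log\gamma$ and show $s\mapsto \log w(e^{s})$ is strictly concave on $[0,\log M]$. Writing $\log w(e^{s})=\log f_{sync}(e^{s})+\log g(e^{\log M-s})$, it suffices that for any $\beta,c\in(0,1)$ the map $t\mapsto \log\!\bigl[(1-\beta^{e^{t}})/\bigl((1-\beta)(1-c+c\,e^{t})\bigr)\bigr]$ is strictly concave: then both summands are strictly concave in $s$ (the second after the affine substitution $s\mapsto\log M-s$). Splitting this map: $\frac{d}{dt}\log(1-\beta^{e^{t}})$ equals exactly the function $a(\beta,e^{t})$ from Lemma~\ref{lemma:ax_monotone}, which is strictly decreasing in its second argument (the monotonicity argument there uses only $\beta\in(0,1)$), so $\log(1-\beta^{e^{t}})$ is strictly concave; and $\frac{d}{dt}\log(1-c+c\,e^{t})=\frac{c\,e^{t}}{1-c+c\,e^{t}}$ is strictly increasing in $t$, so $-\log(1-c+c\,e^{t})$ is concave. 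Summing, $\log w(e^{s})$ is strictly concave, hence $w$ is strictly unimodal on $[1,M]$.

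For ingredient~(II) I would compute directly: each factor $\phi(\gamma)=\frac{1-\beta^{\gamma}}{(1-\beta)(1-c+c\gamma)}$ satisfies $\phi(1)=1$, $\phi(2)=\frac{1+\beta}{1+c}$, and, using $1-\beta^{M}=(1-\beta^{M/2})(1+\beta^{M/2})$, $\phi(M/2)/\phi(M)=\frac{1-c+cM}{(1+\beta^{M/2})(1-c+cM/2)}$. Hence $w(2)/w(1)=\frac{f_{sync}(2)}{f_{sync}(1)}\cdot\frac{g(M/2)}{g(M)}=\dfrac{(1+\alpha_1)/(1+c_1)}{D_2(M)}$ and $w(M/2)/w(M)=\dfrac{(1+\alpha_2)/(1+c_2)}{D_1(M)}$, with $D_i$ as in the proof of Theorem~\ref{theorem:sync}. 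Comparing with~\eqref{eq:step-level} and~\eqref{eq:token-level} yields the equivalences used above (strict failure of~\eqref{eq:step-level} $\Leftrightarrow w(2)<w(1)$, and likewise for~\eqref{eq:token-level}), after which assembling (a)--(c) is bookkeeping; the only edge to watch is equality in~\eqref{eq:step-level}/\eqref{eq:token-level}, which is strict under the parameter hypotheses of the surrounding theorems.

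\textbf{Main obstacle.} The genuinely delicate point is the continuous refinement in part~(a): showing the (unique, by~(I)) maximizer $\gamma^{*}$ of $w$ on $[1,M]$ lies in $[2,M/2]$ rather than merely in $(1,M)$. When $M\ge 2\gamma_2^{*}$, with $\gamma_2^{*}$ the integer peak of $g$, both $f_{sync}(\gamma)$ and $g(M/\gamma)$ are strictly increasing on $[1,2]$, so $w$ is too and $\gamma^{*}\ge 2$; symmetrically for the upper bound. The residual case is when $M$ is so small relative to the peaks that one factor is already decreasing near the relevant endpoint, and there one must quantitatively compare the two marginal log-growth rates --- precisely the type of estimate carried out in the proof of Theorem~\ref{theorem:sync} via the numerical bounds of Lemma~\ref{lemma:ax_monotone}, and where the parameter regime actually enters. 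Everything else (the log-transform concavity computations and the algebraic ratio identities) is routine.
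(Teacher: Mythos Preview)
Your approach is essentially the paper's: the paper computes $w'(\gamma)=\frac{f_{sync}(\gamma)\,g(M/\gamma)}{\gamma}\cdot[\text{bracket}]$ and shows the bracket---which is exactly your $\frac{d}{ds}\log w(e^{s})$---is strictly decreasing via the same two monotonicity facts (Lemma~\ref{lemma:ax_monotone} for $a_i$, and the elementary increase of $\tfrac{c\gamma}{1-c+c\gamma}$), then matches $w(2)/w(1)$ and $w(M/2)/w(M)$ to \eqref{eq:step-level} and \eqref{eq:token-level} just as you do. Your flagged ``main obstacle'' is a non-issue: the lemma's domain is the integer divisors of $M$, and the paper (like your own primary argument for~(a)) only concludes that the maximizer \emph{among divisors} lies in $[2,M/2]$, which follows at once from $w(2)\ge w(1)$, $w(M/2)\ge w(M)$, and unimodality---no localization of the continuous peak is required.
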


\begin{proof}
We first treat this function as a continuous function over 
\(\mathbb{R}\), analyze its derivative to determine its monotonicity, and then restrict its domain to \(\mathbb{N}^+\) to obtain the desired results.\\
\textbf{Step 1: Derivative.}
By the product and chain rules,
\[
w'(\gamma)
=\frac1\gamma\,f_{sync}(\gamma)\,g\!\bigl(M/\gamma\bigr)
\Bigl[
\gamma\,\frac{f'_{sync}(\gamma)}{f_{sync}(\gamma)}
\;-\;
\frac M\gamma\,\frac{g'\!\bigl(M/\gamma\bigr)}{g\!\bigl(M/\gamma\bigr)}
\Bigr].
\]

\textbf{Step 2: Log-derivatives.}
Define
\[
a_i(x)
= x\,\frac{-\ln(\alpha_i)\,\alpha_i^x}{1-\alpha_i^x},\quad i=1,2.
\]
Then
\[
\gamma\frac{f'_{sync}(\gamma)}{f_{sync}(\gamma)}
= a_1(\gamma)\;-\;\frac{c_1\gamma}{1-c_1+c_1\gamma}, 
\quad
\gamma\frac{g'(\gamma)}{g(\gamma)}
= a_2(\gamma)\;-\;\frac{c_2\gamma}{1-c_2+c_2\gamma}.
\]
Hence with \(\eta=M/\gamma\),
\[
w'(\gamma)
=\frac{f_{sync}(\gamma)\,g(\eta)}{\gamma}
\Bigl[
\,a_1(\gamma)-a_2(\eta)
\;-\;\Bigl(\tfrac{c_1\gamma}{1-c_1+c_1\gamma}-\tfrac{c_2\eta}{1-c_2+c_2\eta}\Bigr)
\Bigr].
\]

\textbf{Step 3: Monotonicity.}
By Lemma~\ref{lemma:ax_monotone}, \(a_1(\gamma)\) is strictly decreasing in \(\gamma\) and \(a_2(M/\gamma)\) strictly increasing.  Also
\[
\frac{c_1\gamma}{1-c_1+c_1\gamma}-\frac{c_2\eta}{1-c_2+c_2\eta}
\;=\;
1-\frac{1-c_1}{1-c_1+c_1\gamma}
\;-\;\frac{c_2\,M}{(1-c_2)\gamma+c_2M}
\]
is strictly increasing in \(\gamma\).  Therefore \(w'(\gamma)\) is strictly decreasing on \([1,M]\), and so \(w(\gamma)\) either strictly increasing, or strictly decreasing, or first increasing then decreasing on  \([1,M]\).

\textbf{Step 4: Endpoint comparison.}
Now we restrict the domain to be \(\mathbb{N}^+\)
\begin{itemize}
  \item If \eqref{eq:step-level} fails, then \( w(2) < w(1)\), so \(w\) is either strictly decreasing on \([1,M]\), or first increasing then decreasing and the critical points was in \((1,2)\). Therefore we can conclude that \(w\) is  strictly decreasing on \([2,M]\). Therefore we can conclude that there's unique maximize at \(\gamma =1\), so we have \[
  h(\gamma_1,\gamma_2) < h(1,M), \gamma_1\gamma_2 = M, \gamma_1,\gamma_2 \in \mathbb{N}^+
  \]
  \item If \eqref{eq:token-level} fails, then \(w(\frac{M}{2}) < w(M)\), so \(w\) is either strictly increasing on \([1,M]\), or first increasing then decreasing and the critical points was in \((\frac{M}{2},M)\). Therefore we can conclude that \(w\) is  strictly increasing on \([1,\frac{M}{2}]\). Therefore we can conclude that there's unique maximize at \(\gamma =M\),
   so we have \[
  h(\gamma_1,\gamma_2) < h(M,1), \gamma_1\gamma_2 = M, \gamma_1,\gamma_2 \in \mathbb{N}^+
  \]
  \item If both \eqref{eq:step-level} and \eqref{eq:token-level} holds, then \( w(2) \ge w(1),w(\frac{M}{2}) \ge w(M)\), so \(w\) would achieve the max between \([2,\frac{M}{2}]\)
\end{itemize}
So the lemma follows.
\end{proof}

\begin{lemma}
\label{get mild constraint}
When \(0 < y < 1\)
\[
F(y) = 2-y + (y-2+\frac{1}{y})\ln{(1-y) < 1.157}
\]
\end{lemma}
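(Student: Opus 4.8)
The plan is to turn the claim into a short study of a rational function. First I would simplify: since $y - 2 + \tfrac1y = \tfrac{(1-y)^2}{y}$, we have $F(y) = 2 - y + \tfrac{(1-y)^2}{y}\ln(1-y)$, and substituting $u = 1-y \in (0,1)$ puts this in the tidy form $G(u) := 1 + u + \tfrac{u^2\ln u}{1-u}$, so the goal becomes $G(u) < 1.157$ on $(0,1)$. Since $G \to 1$ at both endpoints, the supremum is attained in the interior.

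Next I would locate the maximizer. A quotient-rule computation gives, after cancellation, $G'(u) = \tfrac{\phi(u)}{(1-u)^2}$ with $\phi(u) = (1-u) + u(2-u)\ln u$, and then $\phi'(u) = (1-u)(1+2\ln u)$. Hence $\phi$ is decreasing on $(0, e^{-1/2})$ and increasing on $(e^{-1/2},1)$; since $\phi(0^+)=1>0$, $\phi(1)=0$, and $\phi(\tfrac12) = \tfrac12 - \tfrac34\ln 2 < 0$ (because $\ln 2 > \tfrac23$) with $\tfrac12 < e^{-1/2}$, the function $\phi$ has a single zero $u_1 \in (0,\tfrac12)$, is positive on $(0,u_1)$ and negative on $(u_1,1)$. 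Therefore $G$ is strictly increasing then strictly decreasing, and $G(u_1) = \max_{(0,1)}G$. At the critical point, $\phi(u_1)=0$ gives $\ln u_1 = -\tfrac{1-u_1}{u_1(2-u_1)}$; substituting this back cancels the logarithm and yields the clean closed form $G(u_1) = 1 + \tfrac{u_1(1-u_1)}{2-u_1}$.

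It remains to bound $u_1$ from above tightly enough. Here I would invoke the Padé-type lower bound $-\ln u > \tfrac{3(1-u^2)}{1+4u+u^2}$ for $u\in(0,1)$ — equivalently $\ln x > \tfrac{3(x^2-1)}{x^2+4x+1}$ for $x>1$ — which I would prove in one line from the identity $(x^2+4x+1)^2 - 12x(x^2+x+1) = (x-1)^4$: the difference of the two sides vanishes at $x=1$ and has derivative $\tfrac{(x-1)^4}{x(x^2+4x+1)^2}>0$. Feeding this into $\phi(u_1)=0$, dividing by $1-u_1>0$, and clearing denominators turns the transcendental constraint into the polynomial inequality $3u_1^3 - 2u_1^2 - 2u_1 + 1 > 0$, i.e.\ $(u_1-1)(3u_1^2+u_1-1)>0$; since $u_1<1$ this forces $3u_1^2+u_1-1<0$, hence $u_1 < \tfrac{\sqrt{13}-1}{6} < 0.4343$. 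Finally, the map $t \mapsto \tfrac{t(1-t)}{2-t}$ has derivative proportional to $2-4t+t^2$, which is positive on $(0, 2-\sqrt2)$, so it increases there; as $u_1 < 0.4343 < 2-\sqrt2$, we get $G(u_1) = 1 + \tfrac{u_1(1-u_1)}{2-u_1} < 1 + \tfrac{(0.4343)(0.5657)}{1.5657} < 1.157$, which is the claim.

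The main obstacle is precision: the true maximum of $F$ is $\approx 1.1563$, so the stated bound leaves only about $7\times 10^{-4}$ of slack, and any cruder estimate of $u_1$ (for instance the elementary $u_1 < \tfrac12$ coming from $-\ln u > 2\tfrac{1-u}{1+u}$) gives only $G(u_1) < 1+\tfrac16 \approx 1.167$, which is too weak. The chosen Padé bound is exactly sharp enough to pin $u_1$ below $\tfrac{\sqrt{13}-1}{6}$, after which everything is a routine rational-function monotonicity check; the one place demanding care is the final numeric comparison $0.4343\cdot 0.5657 < 0.157\cdot 1.5657$, which holds with margin $\approx 10^{-4}$.
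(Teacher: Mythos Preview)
Your proof is correct and takes a genuinely different route from the paper's argument. The paper proceeds directly on $F(y)$: it computes $F''(y)=\bigl(y(1+y)+2\ln(1-y)\bigr)/y^3<0$, so $F$ is concave on $(0,1)$; then it appeals to a numerical root-finder to locate $y^*\approx 0.56940$ and to a numerical evaluation $F(y^*)\approx 1.15623<1.157$. Your approach instead passes to $u=1-y$, obtains $G(u)=1+u+\tfrac{u^2\ln u}{1-u}$, shows unimodality via the factorization $\phi'(u)=(1-u)(1+2\ln u)$, and crucially uses the critical-point relation to eliminate the logarithm and write the maximum as the rational expression $G(u_1)=1+\tfrac{u_1(1-u_1)}{2-u_1}$. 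You then bound $u_1$ algebraically via the Pad\'e-type inequality $-\ln u>\tfrac{3(1-u^2)}{1+4u+u^2}$ (proved from the identity $(x^2+4x+1)^2-12x(x^2+x+1)=(x-1)^4$), reducing everything to the cubic $(u_1-1)(3u_1^2+u_1-1)>0$ and the final integer-arithmetic comparison $4343\cdot 5657=24568351<24581490=1570\cdot 15657$. The upshot: the paper's proof is short but leans on black-box numerics (root location and evaluation of $\ln$ at a non-special point), whereas your argument is longer but fully symbolic up to elementary integer checks, and the closed form for $G(u_1)$ is a nice structural observation the paper does not make. Both establish the same unimodality; the difference is entirely in how the peak value is certified below $1.157$.
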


\begin{proof}
Define
\[
F(y)\;=\;2 - y \;+\;\bigl(y - 2 + y^{-1}\bigr)\,\ln(1-y),
\qquad 0<y<1.
\]

\textbf{First derivative.}
A direct calculation gives
\[
F'(y)
=\frac{d}{dy}\Bigl[(y-2+y^{-1})\ln(1-y)\Bigr]\;-\;1
=(1 - y^{-2})\,\ln(1-y)\;-\;\frac1y.
\]

\textbf{Second derivative and concavity of \(F'\).}
Differentiating again,
\[
\begin{aligned}
F''(y)
&=\frac{d}{dy}\Bigl[(1 - y^{-2})\,\ln(1-y)\Bigr]+\frac{d}{dy}\bigl(-y^{-1}\bigr)\\
&=\frac{2\,\ln(1-y)}{y^3}
\;-\;\frac{1 - y^{-2}}{1-y}
\;+\;\frac1{y^2}
=\frac{y(1+y)+2\ln(1-y)}{y^3}.
\end{aligned}
\]
Set 
\[
N(y)=y(1+y)+2\ln(1-y),
\]
so that \(F''(y)=N(y)/y^3\).  On \((0,1)\),
\[
N'(y)
=\frac{d}{dy}\bigl[y+y^2+2\ln(1-y)\bigr]
=1+2y-\frac{2}{1-y}
=-\frac{2y^2-y+1}{1-y}<0,
\]
and \(N(0)=0\).  Hence \(N(y)<0\) for all \(y\in(0,1)\), which implies
\[
F''(y)<0\quad\text{on }(0,1).
\]
Thus \(F'\) is strictly decreasing.

\textbf{Sign‐change of \(F'\).}
- As \(y\to0^+\), \(\ln(1-y)\sim -y\), so
  \[
    F'(y)\sim(-y)\bigl(1-y^{-2}\bigr)-\tfrac1y = +O(y)>0.
  \]
- As \(y\to1^-\), \(\ln(1-y)\to-\infty\) while \((1-1/y^2)\to1\), so \(F'(y)\to-\infty\).

By continuity and strict decrease, there is a unique \(y^*\in(0,1)\) with \(F'(y^*)=0\), and
\[
F'(y)>0\quad(0<y<y^*), 
\qquad
F'(y)<0\quad(y^*<y<1).
\]
Hence \(F\) increases on \((0,y^*)\) and decreases on \((y^*,1)\), so its maximum on \((0,1)\) occurs at \(y^*\).

\textbf{Numerical evaluation.}
Numerically one finds
\[
y^*\approx0.5693971022,
\qquad
F(y^*)\approx1.1562281731<1.157.
\]

\textbf{Conclusion.}
Therefore for all \(y\in(0,1)\),
\[
F(y)\;\le\;F(y^*)\;<\;1.157,
\]
as claimed.
\end{proof}

\section{Calculation of $S_2(n)$}
We'll prove when $n \to \infty$
\[
S_2(n)
=\frac{n + \sum_{i=1}^n X_i}
{\sum_{i=1}^n\bigl\lceil\tfrac{X_i+1}{\gamma}\bigr\rceil
\;+\;
c_1\sum_{i=1}^n (X_i\bmod\gamma)}
\;\longrightarrow\;
\frac{1-\alpha_1^{\gamma}}
     {(1-\alpha)_1
      +c_1\bigl[\alpha_1-\alpha_1^{\gamma+1}
             -\gamma(1-\alpha_1)\alpha_1^{\gamma}\bigr]}\,.
\]

\textbf{Step 1: Use the Law of Large Numbers}
According to the Law of Large Numbers
\[
S_2(n)
=\frac{1 + \frac{1}{n}\sum_{i=1}^n X_i}
{\frac{1}{n}\sum_{i=1}^n\bigl\lceil\tfrac{X_i+1}{\gamma}\bigr\rceil
\;+\;
\frac{c_1}{n}\sum_{i=1}^n (X_i\bmod\gamma)}
\;\longrightarrow\;
\frac{1 + E[X_i]}
{E\bigl\lceil\tfrac{X_i+1}{\gamma}\bigr\rceil
\;+\;
c_1E[X_i\bmod\gamma]}
\]

Here, the PMF of \(X_i\) is:
\[
P(X_i = k) = \alpha_1^k (1-\alpha_1)
\]
And its expectation is:
\[
E[X_i] = \frac{\alpha_1}{1-\alpha_1}
\]

\textbf{Step 2: Compute \(E\left[\left\lceil \frac{X_i + 1}{\gamma} \right\rceil\right]\)}
Let \(Y = \left\lceil \frac{X_i + 1}{\gamma} \right\rceil\). 

\noindent\textbf{Key Observation}
The ceiling function \(\left\lceil \frac{X_i + 1}{\gamma} \right\rceil\) can be expressed in terms of integer thresholds. For \(m \geq 0\):
\[
\left\lceil \frac{X_i + 1}{\gamma} \right\rceil = m + 1 \quad \text{if} \quad X_i \in [m \gamma, (m + 1) \gamma - 1]
\]

Thus:
\[
Y = m + 1 \quad \text{for} \quad X_i \in [m \gamma, (m + 1) \gamma - 1], \quad m = 0, 1, 2, \dots
\]

\noindent\textbf{Compute \(E[Y]\)}
\[
E[Y] = \sum_{m=0}^\infty (m + 1) P\left(m \gamma \leq X_i \leq (m + 1) \gamma - 1\right)
\]
The probability \(P\left(m \gamma \leq X_i \leq (m + 1) \gamma - 1\right)\) is:
\[
\sum_{k = m \gamma}^{(m + 1) \gamma - 1} P(X_i = k) = \alpha_1^{m \gamma}(1-\alpha_1^{\gamma})
\]
Therefore we can get:
\[
E\left[\left\lceil \frac{X_i + 1}{\gamma} \right\rceil\right] = \frac{1}{1 - \alpha_1^\gamma}
\]

\textbf{Step 3: Compute \(E\left[ X_i \bmod \gamma 
 \right]\)}
To calculate the expectation of \(X_i \bmod \gamma\), where \(X_i\) follows the given geometric distribution and \(\gamma\) is an integer greater than 4, we proceed as follows:

\noindent\textbf{Compute \(P(X_i \bmod \gamma = r)\)}
For \(r \in \{0, 1, \dots, \gamma - 1\}\), we have:
\[
P(X_i \bmod \gamma = r) = \sum_{m=0}^\infty P(X_i = m \gamma + r) = \frac{(1-\alpha_1)\alpha_1^r}{1-\alpha_1^{\gamma}}
\]

\noindent\textbf{Compute \(E[X_i \bmod \gamma]\)}
The expectation is:
\[
E[X_i \bmod \gamma] = \sum_{r=0}^{\gamma - 1} r \cdot P(X_i \bmod \gamma = r) = \sum_{r=0}^{\gamma - 1} r \cdot \frac{(1-\alpha_1)\alpha_1^r}{1-\alpha_1^{\gamma}} = \frac{\alpha_1 - \alpha_1^{\gamma+1} - \gamma(1-\alpha_1)\alpha_1^{\gamma}}{(1-\alpha_1)(1-\alpha_1^{\gamma})}
\]

\section{Illustration of Hybrid Approach}

\begin{figure}[ht]
    \centering
    \includegraphics[width=0.8\textwidth]{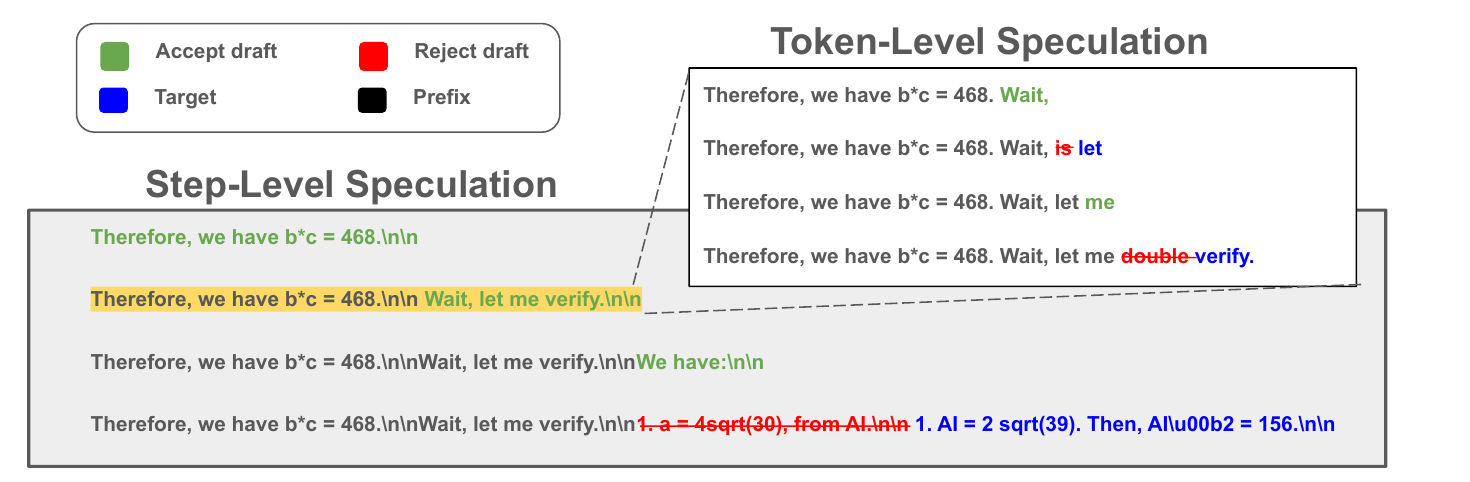}
    \caption{Illustration of hybrid approach.}
    \label{fig:algo}
\end{figure}

\end{document}